\newcommand{\beas}{\begin{eqnarray*}}
\newcommand{\eeas}{\end{eqnarray*}}
\newtheorem{thm}{Theorem}
\newtheorem{cor}{Corollary}
\newtheorem{lemma}{Lemma}
\newtheorem{defn}{Definition}
\theoremstyle{plain}
\newcommand\prob{\mathop{\rm Pr}}
\newcommand\E{\mathbb{E}}
\newcommand\T{T}
\newcommand\cT{\mathcal{T}}
\newcommand\cP{\Pi}
\newcommand{\cPt}{\widetilde{\Pi}}
\newcommand\cI{\mathcal{I}}
\newcommand\cL{\mathcal{L}}
\newcommand\vp{\mathbf{p}}
\newcommand\vy{\mathbf{y}}
\newcommand\vz{\mathbf{z}}
\newcommand\vb{\mathbf{b}}
\newcommand\vx{\mathbf{x}}
\newcommand{\B}{\mathbf{B}}
\newcommand{\Bt}{\widetilde{\mathbf{B}}}
\newcommand{\p}{\tilde{p}}
\begin{document}

\title{\textbf{Group-based query learning \\ for rapid diagnosis in time-critical situations}}

\date{}
\author{\small Gowtham Bellala$^{1}$, Suresh K. Bhavnani$^{2}$, Clayton D. Scott$^{1,2}$ \\ \small $^{1}$Department of Electrical Engineering and Computer Science,\\ \small $^2$Center for Computational Medicine and Bioinformatics, University of Michigan, Ann Arbor, MI 48109 \\ \small E-mail: \{ gowtham, bhavnani, clayscot \}@umich.edu}
\maketitle

\begin{abstract}
In query learning, the goal is to identify an unknown object while minimizing the number of "yes or no" questions (queries) posed about that object. We consider three extensions of this fundamental problem that are motivated by practical considerations in real-world, time-critical identification tasks such as emergency response.  First, we consider the problem where the objects are partitioned into groups, and the goal is to identify only the group to which the object belongs. Second, we address the situation where the queries are partitioned into groups, and an algorithm may suggest a group of queries to a human user, who then selects the actual query. Third, we consider the problem of query learning in the presence of persistent query noise, and relate it to group identification. To address these problems we show that a standard algorithm for query learning, known as the splitting algorithm or generalized binary search, may be viewed as a generalization of Shannon-Fano coding. We then extend this result to the group-based settings, leading to new algorithms. The performance of our algorithms is demonstrated on simulated data and on a database used by first responders for toxic chemical identification.
\end{abstract}

%

\section{Introduction}

In emergency response applications, as well as other time-critical diagnostic tasks, there is a need to rapidly identify a cause by selectively acquiring information from the environment. For example, in the problem of toxic chemical identification, a first responder may question victims of chemical exposure regarding the symptoms they experience. Chemicals that are inconsistent with the reported symptoms may then be eliminated. Because of the importance of this problem, several organizations have constructed extensive evidence-based databases (e.g., Haz-Map\footnote{\url{http://hazmap.nlm.nih.gov/}}) that record toxic chemicals and the acute symptoms which they are known to cause. Unfortunately, many symptoms tend to be nonspecific (e.g., vomiting can be caused by many different chemicals), and it is therefore critical for the first responder to pose these questions in a sequence that leads to chemical identification in as few questions as possible.

This problem has been studied from a mathematical perspective for decades, and has been described variously as query learning (with membership queries) \cite{angluin}, active learning \cite{dasgupta}, object/entity identification \cite{garey1,garey2}, and binary testing \cite{garey2,loveland}. In this work we refer to the problem as query learning or object identification. The standard mathematical formulation of query learning is often idealized relative to many real-world diagnostic tasks, in that it does not account for time constraints and resulting input errors. In this paper we investigate algorithms that extend query learning to such more realistic settings by addressing the need for rapid response, and error-tolerant algorithms.


In query learning there is an unknown object $\theta$ belonging to a set $\Theta = \{\theta_1,\cdots,\theta_M\}$ of $M$ objects and a set $Q = \{q_1,\cdots,q_N\}$ of $N$ distinct subsets of $\Theta$ known as queries. Additionally, the vector $\cP = (\pi_1,\cdots,\pi_M)$ denotes the \emph{a priori} probability distribution over $\Theta$. The goal is to determine the unknown object $\theta \in \Theta$ through as few queries from $Q$ as possible, where a query $q \in Q$ returns a value $1$ if $\theta \in q$, and $0$ otherwise. A query learning algorithm thus corresponds to a decision tree, where the internal nodes are queries, and the leaf nodes are objects. Problems of this nature arise in applications such as fault testing \cite{koren,unluyurt}, machine diagnostics \cite{shiozaki}, disease diagnosis \cite{loveland,pattipati}, computer vision \cite{geman} and active learning \cite{dasgupta,nowak}. Algorithms and performance guarantees have been extensively developed in the literature, as described in Section \ref{sec:related work} below.

In the context of toxic chemical identification, the objects are chemicals, and the queries are symptoms. A query learning algorithm will prompt the first responder with a symptom. Once the presence or absence of that symptom is determined, a new symptom is suggested by the algorithm, and so on, until the chemical is uniquely determined. In this paper, we consider variations on this basic query learning framework that are motivated by toxic chemical identification, and are naturally applicable to many other time-critical diagnostic tasks. In particular, we develop theoretical results and new algorithms for what might be described as group-based query learning.

First, we consider the case where $\Theta$ is partitioned into groups of objects, and it is only necessary to identify the group to which the object belongs. For example, the appropriate response to a toxic chemical may only depend on the class of chemicals to which it belongs (pesticide, corrosive acid, etc.). As our experiments reveal, a query learning algorithm designed to rapidly identify individual objects is not necessarily efficient for group identification.

Second, we consider the problem where the set $Q$ of queries is partitioned into groups (respiratory symptoms, cardio symptoms, etc.). Instead of suggesting specific symptoms to the user, we design an algorithm that suggests a group of queries, and allows the user the freedom to input information on any query in that group. Although such a system will theoretically be less efficient, it is motivated by the fact that in a practical application, some symptoms will be easier for a given user to understand and identify. Instead of suggesting a single symptom, which might seem ''out of the blue" to the user, suggesting a query group will be less bewildering, and hence lead to a more efficient and accurate outcome. Our experiments demonstrate that the proposed algorithm based on query groups identifies objects in nearly as few queries as a fully active method.

Third, we apply our algorithm for group identification to the problem of query learning with persistent query noise. Persistent query noise occurs when the response of a query is in error, but cannot be resampled, as is often assumed in the literature. Such is the case when the presence or absence of a symptom is incorrectly determined, which is more likely in a stressful emergency response scenario. Experiments show our method offers significant gains over algorithms not designed for persistent query noise.

Our algorithms are derived in a common framework, and are based on a reinterpretation of a standard query learning algorithm (the splitting algorithm, or generalized binary search) as a generalized form of Shannon-Fano coding. We first establish an exact formula for the expected number of queries by an arbitrary decision tree, and show that the splitting algorithm effectively performs a greedy, top-down optimization of this objective. We then extend this formula to the case of group identification and query groups, and develop analogous greedy algorithms. In the process, we provide a new interpretation of impurity-based decision tree induction for multi-class classification. 

We apply our algorithms to both synthetic data and to the WISER database (version $4.21$). WISER\footnote{\url{http://wiser.nlm.nih.gov/}}, which stands for {\bf W}ireless {\bf I}nformation {\bf S}ystem for {\bf E}mergency {\bf R}esponders, is a decision support system developed by the National Library of Medicine (NLM) for first responders. This database describes the binary relationship between $298$ toxic chemicals (corresponds to the number of distinguishable chemicals in this database) and $79$ acute symptoms. The symptoms are grouped into $10$ categories (e.g., neurological, cardio) as determined by NLM, and the chemicals are grouped into $16$ categories (e.g., pesticides, corrosive acids) as determined by a toxicologist and a Hazmat expert. 

\subsection{Prior and related work}
\label{sec:related work}
The problem of selecting an optimal sequence of queries from $Q$ to uniquely identify the unknown object $\theta$ is equivalent to determining an optimal binary decision tree, where each internal node in the tree corresponds to a query, each leaf node corresponds to a unique object from the set $\Theta$ and the optimality is with respect to minimizing the expected depth of the leaf node corresponding to $\theta$. In the special case when the query set $Q$ is \emph{complete} (where a query set $Q$ is said to be \emph{complete} if for any $S \subseteq \Theta$ there exists a query $q \in Q$ such that either $q = S$ or $\Theta \setminus q = S$), the problem of constructing an optimal binary decision tree is equivalent to construction of optimal variable-length binary prefix codes with minimum expected length. This problem has been widely studied in information theory with both Shannon \cite{shannon} and Fano \cite{fano} independently proposing a top-down greedy strategy to construct suboptimal binary prefix codes, popularly known as Shannon-Fano codes. Later Huffman \cite{huffman} derived a simple bottom-up algorithm to construct optimal binary prefix codes. A well known lower bound on the expected length of binary prefix codes is given by the Shannon entropy of the probability distribution $\cP$ \cite{cover}. 

When the query set $Q$ is not \emph{complete}, a query learning problem can be considered as ``constrained'' prefix coding with the same lower bound on the expected depth of a tree. This problem has also been studied extensively in the literature with Garey \cite{garey1,garey2} proposing a dynamic programming based algorithm to find an optimal solution. This algorithm runs in exponential time in the worst case. Later, Hyafil and Rivest \cite{rivest} showed that determining an optimal binary decision tree for this problem is NP-complete. Thereafter, various greedy algorithms \cite{loveland,roy,kosaraju} have been proposed to obtain a suboptimal binary decision tree. The most widely studied algorithm known as the \emph{splitting algorithm} \cite{loveland} or \emph{generalized binary search} (GBS) \cite{dasgupta,nowak}, selects a query that most evenly divides the probability mass of the remaining objects \cite{dasgupta,loveland,nowak,goodman}. Various bounds on the performance of this greedy algorithm have been established in \cite{dasgupta,loveland,nowak}. Goodman and Smyth \cite{goodman} observe that this algorithm can be viewed as a generalized version of Shannon-Fano coding. In Section \ref{sec:object identification}, we demonstrate the same through an alternative approach that can be generalized to group-based query learning problems, leading to efficient algorithms in these settings. As far as we know, there has been no previous work on group queries or group identification.  

Though most of the above work has been devoted to query learning in the ideal setting assuming no noise, it is unrealistic to assume that the responses to queries are without error in many applications. The problem of learning in the presence of query noise has been studied in \cite{nowak,kaar,nowak2} where the queries can be resampled or repeated. However, in certain applications, resampling or repeating the query does not change the query response confining the algorithm to non-repeatable queries. The work by R\'{e}nyi in \cite{renyi} is regarded to be the first to consider this more stringent noise model, also referred to as persistent noise in the literature \cite{goldman,jackson,hanneke}. However, his work has focused on the passive setting where the queries are chosen at random. Learning under persistent noise model has also been studied in \cite{goldman,jackson,angluin2} where the goal was to identify or learn Disjunctive Normal Form (DNF) formulae from noisy data. The query (label) complexity of pool-based active learning in the Probably Approximately Correct (PAC) model in the presence of persistent classification noise has been studied in \cite{hanneke} and active learning algorithms in this setting have been proposed in \cite{hanneke,balcan}. Here, we focus on the problem of query learning under the persistent noise model where the goal is to uniquely identify the true object. Finally, this work was motivated by earlier work that applied GBS to WISER \cite{suresh}.   

\subsection{Notation}
\label{sec:notation}
We denote a query learning problem by a pair $(\B,\cP)$ where $\B$ is a binary matrix with $b_{ij}$ equal to $1$ if $\theta_i \in q_j$, and $0$ otherwise. A decision tree $\T$ constructed on $(\B,\cP)$ has a query from the set $Q$ at each of its internal nodes with the leaf nodes terminating in the objects from the set $\Theta$. At each internal node in the tree, the object set under consideration is divided into two subsets, corresponding to the objects that respond $0$ and $1$ to the query, respectively. For a decision tree with $L$ leaves, the leaf nodes are indexed by the set $\cL = \{1,\cdots,L\}$ and the internal nodes are indexed by the set $\cI = \{L+1,\cdots,2L-1\}$. At any internal node $a \in \cI$, let $l(a), r(a)$ denote the ``left'' and ``right'' child nodes, where the set $\Theta_a \subseteq \Theta$ corresponds to the set of objects that reach node `$a$', and the sets $\Theta_{l(a)} \subseteq \Theta_a, \Theta_{r(a)} \subseteq \Theta_a$ corresponds to the set of objects that respond $0$ and $1$ to the query at node `$a$', respectively. We denote by $\pi_{\Theta_a} := \sum_{\{i: \theta_i \in \Theta_a \}}\pi_i$, the probability mass of the objects under consideration at any node `$a$' in the tree. Also, at any node `$a$', the set $Q_a \subseteq Q$ corresponds to the set of queries that have been performed along the path from the root node up to node `$a$'. 

We denote the Shannon entropy of a vector $\cP = (\pi_1,\cdots,\pi_M)$ by $H(\cP) := - \sum_i \pi_i \log_2 \pi_i$ and the Shannon entropy of a proportion $\pi \in [0,1]$ by $H(\pi) := -\pi \log_2 \pi - (1-\pi) \log_2 (1-\pi)$, where we use the limit, $\displaystyle \lim_{\pi \rightarrow 0} \pi \log_2 \pi = 0$ to define the limiting cases. Finally, we use the random variable $K$ to denote the number of queries required to identify an unknown object $\theta$ or the group of an unknown object $\theta$ using a given tree.

\section{Generalized Shannon-Fano Coding}
\label{sec:object identification}
Before proceeding to group-based query learning, we first present an exact formula for standard query learning problems. This result allows us to interpret the splitting algorithm or GBS as generalized Shannon-Fano coding. Furthermore, our proposed algorithms for group-based settings are based on generalizations of this result. 

First, we define a parameter called the \emph{reduction factor} on the binary matrix/tree combination that provides a useful quantification on the expected number of queries required to identify an unknown object.

\begin{defn}
\label{defn:rhor}
A \emph{reduction factor} at any internal node `$a$' in a decision tree is defined as \\ 
$\rho_a = \max(\pi_{\Theta_{l(a)}},\pi_{\Theta_{r(a)}})/\pi_{\Theta_{a}}$ and the \emph{overall reduction factor} of a tree is defined as $\rho = \max_{a \in \cI}\rho_a$.
\end{defn}
Note from the above definition that $0.5 \leq \rho_a \leq \rho \leq 1$ and we describe a decision tree with $\rho = 0.5$ to be a perfectly balanced tree. 

Given a query learning problem $(\B,\cP)$, let $\cT(\B,\cP)$ denote the set of decision trees that can uniquely identify all the objects in the set $\Theta$. For any decision tree $\T \in \cT(\B,\cP)$, let $\{\rho_a\}_{a \in \cI}$ denote the set of reduction factors and let $d_i$ denote the depth of object $\theta_i$ in the tree. Then the expected number of queries required to identify an unknown object using the given tree is equal to
\begin{eqnarray*}
\E[K] = \sum_{i=1}^M \prob(\theta = \theta_i) \E[K|\theta = \theta_i]  = \sum_{i=1}^M \pi_i d_i .
\end{eqnarray*}

\begin{thm}
\label{thm:object identification}
The expected number of queries required to identify an unknown object using a tree $\T$ with reduction factors $\{\rho_a \}_{a \in \cI}$ constructed on $(\B,\cP)$ is given by
\begin{equation}
\label{eq:object identification}
\E[K] = H(\cP) + \sum_{a \in \cI} \pi_{\Theta_{a}}[1 - H(\rho_a)] = \frac{H(\cP)}{\sum_{a\in \cI}\tilde{\pi}_{\Theta_a}H(\rho_a)}
\end{equation}
where $\tilde{\pi}_{\Theta_a} := \frac{\pi_{\Theta_a}}{\sum_{r\in \cI}\pi_{\Theta_r}}$.
\end{thm}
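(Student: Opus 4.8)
The plan is to separate the statement into a purely combinatorial identity and an entropy bookkeeping identity. First I would rewrite $\E[K]=\sum_{i=1}^M\pi_i d_i$ by expressing the depth $d_i$ as the number of internal nodes on the root-to-leaf path of $\theta_i$, namely $d_i=\sum_{a\in\cI}\ind{\theta_i\in\Theta_a}$, since $\theta_i\in\Theta_a$ exactly when the internal node $a$ is an ancestor of the leaf holding $\theta_i$ (and there are exactly $d_i$ such ancestors). Swapping the order of summation then gives the clean identity $\E[K]=\sum_{a\in\cI}\sum_{\{i:\theta_i\in\Theta_a\}}\pi_i=\sum_{a\in\cI}\pi_{\Theta_a}$.

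With this in hand, the first claimed equality in the theorem is equivalent to
\[
\sum_{a\in\cI}\pi_{\Theta_a}H(\rho_a)=H(\cP).
\]
To prove this, attach to every node $a$ of $\T$ (internal or leaf) the entropy $H_a$ of the conditional distribution $(\pi_i/\pi_{\Theta_a})_{\theta_i\in\Theta_a}$; then $H_a=0$ at every leaf, and $H_{\text{root}}=H(\cP)$ because $\pi_{\Theta_{\text{root}}}=\sum_i\pi_i=1$. At an internal node $a$ the children partition $\Theta_a$, and the conditional masses of $\Theta_{l(a)},\Theta_{r(a)}$ inside $\Theta_a$ are $1-\rho_a$ and $\rho_a$ in one order or the other. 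Applying the grouping (chain-rule) property of Shannon entropy together with the symmetry $H(\rho_a)=H(1-\rho_a)$ gives
\[
\pi_{\Theta_a}H_a=\pi_{\Theta_a}H(\rho_a)+\pi_{\Theta_{l(a)}}H_{l(a)}+\pi_{\Theta_{r(a)}}H_{r(a)}\qquad(a\in\cI).
\]
Summing over $a\in\cI$, the right-hand side telescopes: every non-root node occurs exactly once as a child term (sign $-$) and every internal node occurs exactly once as a parent term (sign $+$), so everything cancels except the root contribution $\pi_{\Theta_{\text{root}}}H_{\text{root}}=H(\cP)$ and the leaf contributions, which vanish. This yields $\sum_{a\in\cI}\pi_{\Theta_a}H(\rho_a)=H(\cP)$, hence the first equality. (Equivalently, one may run an induction on the number of leaves $L$, splitting off the root and invoking the hypothesis on the two subtrees; the telescoping sum is just the unrolled form of that induction.)

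For the second equality I would substitute $\tilde{\pi}_{\Theta_a}=\pi_{\Theta_a}/\sum_{r\in\cI}\pi_{\Theta_r}=\pi_{\Theta_a}/\E[K]$ (using the first step) into $\sum_{a\in\cI}\tilde{\pi}_{\Theta_a}H(\rho_a)$, which becomes $\tfrac{1}{\E[K]}\sum_{a\in\cI}\pi_{\Theta_a}H(\rho_a)=H(\cP)/\E[K]$ by the identity above; solving for $\E[K]$ gives the stated ratio form. I expect the only step needing genuine care to be the telescoping/cancellation — tracking which nodes appear as parents versus children, that the root supplies $H(\cP)$, and that leaf entropies are $0$ — together with the (implicit) normalization $\sum_i\pi_i=1$ and the use of $H(\rho)=H(1-\rho)$, which is exactly what makes taking the $\max$ in the definition of $\rho_a$ harmless. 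Everything else is routine.
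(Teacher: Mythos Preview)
Your proposal is correct and, at the technical level, rests on exactly the same two ingredients the paper uses: the combinatorial identity $\E[K]=\sum_{a\in\cI}\pi_{\Theta_a}$ (which the paper also invokes verbatim for the second equality) and the one-step entropy chain rule $\pi_{\Theta_a}H_a=\pi_{\Theta_a}H(\rho_a)+\pi_{\Theta_{l(a)}}H_{l(a)}+\pi_{\Theta_{r(a)}}H_{r(a)}$, which is precisely the object-identification specialization of the paper's Lemma~\ref{lem:lemma group identification} (the group-reduction-factor terms $H(\rho_a^i)$ all vanish when each group is a singleton).

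The only real difference is organizational. The paper obtains the first equality as a special case of Theorem~\ref{thm:group identification group queries}, proving a single recursion for $\pi_{\Theta_a}(\mu_a-H_a)$ and unrolling it by induction on subtrees; you instead decouple the two pieces, establishing $\E[K]=\sum_a\pi_{\Theta_a}$ and $H(\cP)=\sum_a\pi_{\Theta_a}H(\rho_a)$ separately and summing the chain-rule identity globally so that the non-root and leaf terms telescope. As you yourself note, the telescoping sum is just the unrolled induction, so the arguments are equivalent; your packaging is a bit more direct for this particular theorem, while the paper's buys reusability for the group-identification and group-query extensions.
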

\begin{proof}
The first equality is a special case of Theorem \ref{thm:group identification} below. The second equality follows from the observation $\E[K] = \sum_{i=1}^M \pi_i d_i = \sum_{a \in \cI} \pi_{\Theta_a}$. Hence replacing $\pi_{\Theta_a}$ with $\tilde{\pi}_{\Theta_a} \cdot \E[K]$ in the first equality leads to the result.
\end{proof}

In the second equality, the term $\sum_{a \in \cI} \tilde{\pi}_{\Theta_a}H(\rho_a)$ denotes the average entropy of the reduction factors, weighted by the proportion of times each internal node `$a$' is queried in the tree. This theorem re-iterates an earlier observation that the expected number of queries required to identify an unknown object using a tree constructed on $(\B,\cP)$ (where the query set $Q$ is not necessarily a \emph{complete} set) is bounded below by its entropy $H(\cP)$. It also follows from the above result that a tree attains this minimum value (i.e., $\E[K] = H(\cP)$) iff it is perfectly balanced, i.e., the overall reduction factor $\rho$ of the tree is equal to $0.5$. 

From the first equality, the problem of finding a decision tree with minimum $\E[K]$ can be formulated as the following optimization problem

\begin{eqnarray}
\label{eq:optimization object identification}
& \underset{\T \in \cT(\B,\cP)}{\operatorname{\min}} H(\cP ) + \sum_{a \in \cI} \pi_{\Theta_a}[1 - H(\rho_a )] &
\end{eqnarray}
Since $\cP$ is fixed, the optimization problem reduces to minimizing $\sum_{a \in \cI} \pi_{\Theta_a}[1 - H(\rho_a)]$ over the set of trees $\cT(\B,\cP)$. Note that the reduction factor $\rho_a$ depends on the query chosen at node `$a$' in a tree $\T$. As mentioned earlier, finding a global optimal solution for this optimization problem is NP-complete. 

Instead, we may take a top down approach and minimize the objective function by minimizing the term $\pi_{\Theta_a} [1 - H(\rho_a)]$ at each internal node, starting from the root node. Since $\pi_{\Theta_a}$ is independent of the query chosen at node `$a$', this reduces to minimizing $\rho_a$ (i.e., choosing a split as balanced as possible) at each internal node $a \in \cI$. The algorithm can be summarized as shown in Algorithm \ref{algo_objectidentification} below.

\restylealgo{boxed}
\begin{algorithm}
\dontprintsemicolon
\caption{Greedy decision tree algorithm for object identification \label{algo_objectidentification}}
\textbf{\underline{Generalized Binary Search (GBS)}} \;
\BlankLine
\textbf{Initialization :}  \emph{Let the leaf set consist of the root node} \;
\SetLine
\While{some leaf node `$a$' has $|\Theta_a| > 1$}{
\For{each query $q \in Q \setminus Q_a$}{
Find $\Theta_{l(a)}$ and $\Theta_{r(a)}$ produced by making a split with query $q$\;
Compute the reduction factor $\rho_a$ produced by query $q$\;
}
Choose a query with the smallest reduction factor\;
Form child nodes $l(a),r(a)$\;
}
\end{algorithm}
Note that when the query set $Q$ is \emph{complete}, Algorithm \ref{algo_objectidentification} is similar to Shannon-Fano coding \cite{shannon,fano}. The only difference is that in Shannon-Fano coding, for computational reasons, the queries are restricted to those that are based on thresholding the prior probabilities $\pi_i$.

\begin{cor}
The standard splitting algorithm/GBS is a greedy algorithm to minimize the expected number of queries required to uniquely identify an object.
\end{cor}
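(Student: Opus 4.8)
The plan is to combine the exact formula from Theorem~\ref{thm:object identification} with a monotonicity property of the binary entropy function and a top-down induction on the tree. First I would note that, by the first equality in (\ref{eq:object identification}), every tree $\T \in \cT(\B,\cP)$ satisfies $\E[K] = H(\cP) + \sum_{a\in\cI}\pi_{\Theta_a}[1-H(\rho_a)]$, and since the term $H(\cP)$ depends only on the fixed prior and not on $\T$, minimizing $\E[K]$ over $\cT(\B,\cP)$ is equivalent to minimizing $\sum_{a\in\cI}\pi_{\Theta_a}[1-H(\rho_a)]$.

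Next I would set up the top-down greedy framework underlying Algorithm~\ref{algo_objectidentification}: a tree is grown by repeatedly selecting, at the current unresolved leaf node `$a$', a query from $Q\setminus Q_a$, which determines the split $(\Theta_{l(a)},\Theta_{r(a)})$ and hence the reduction factor $\rho_a$. The key observation is that $\pi_{\Theta_a}$ — the probability mass surviving to node `$a$' — is determined entirely by the queries already chosen at the ancestors of `$a$', and is therefore independent of the query chosen \emph{at} `$a$'. Consequently, holding the ancestors fixed, the only portion of the node-`$a$' summand $\pi_{\Theta_a}[1-H(\rho_a)]$ that the algorithm controls when it acts at node `$a$' is the factor $1-H(\rho_a)$; the greedy step is thus to minimize $1-H(\rho_a)$, equivalently to maximize $H(\rho_a)$.

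I would then invoke the elementary fact that the binary entropy function $H(\pi)$ is strictly decreasing on $[1/2,1]$, together with the bound $\rho_a \in [1/2,1]$ from Definition~\ref{defn:rhor}. Hence $1-H(\rho_a)$ is minimized precisely when $\rho_a$ is as small as possible, i.e., when the selected query splits the surviving mass $\pi_{\Theta_a}$ as evenly as possible. This is exactly the rule in Algorithm~\ref{algo_objectidentification} — at each node choose the query with the smallest reduction factor — so GBS is precisely the greedy, top-down procedure that at every step minimizes the contribution of the current node to the objective $\sum_{a\in\cI}\pi_{\Theta_a}[1-H(\rho_a)]$, which by the first paragraph is equivalent to greedily minimizing $\E[K]$.

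The one point I would be careful about in the writeup — and the main thing to pin down rather than a genuine obstacle — is that this objective does \emph{not} separate into independent per-node subproblems: the split chosen at `$a$' changes $\pi_{\Theta_{l(a)}}$ and $\pi_{\Theta_{r(a)}}$ and therefore the summands at every descendant of `$a$'. So the claim is not that GBS returns a global minimizer (the excerpt already notes this is NP-complete), but only that it is the natural greedy heuristic for the objective in (\ref{eq:optimization object identification}): it optimizes the locally controllable term $1-H(\rho_a)$ at each node, in top-down order, while ignoring downstream effects. Making ``greedy'' precise amounts exactly to the invariance of $\pi_{\Theta_a}$ under the choice at `$a$' and the monotonicity of $H$ on $[1/2,1]$, both recorded above, so no further computation is needed.
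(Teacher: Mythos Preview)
Your proposal is correct and follows essentially the same argument as the paper: reduce $\E[K]$ via Theorem~\ref{thm:object identification} to $H(\cP)+\sum_{a\in\cI}\pi_{\Theta_a}[1-H(\rho_a)]$, note that $\pi_{\Theta_a}$ is fixed once the ancestors of $a$ are fixed, and conclude that the greedy per-node minimization of $1-H(\rho_a)$ amounts to minimizing $\rho_a$, which is exactly the GBS rule. Your explicit appeal to the monotonicity of $H$ on $[1/2,1]$ and your caveat about downstream effects are a bit more detailed than the paper's own discussion, but the line of reasoning is identical.
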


Corollary \ref{cor:object identification} below follows from Theorem \ref{thm:object identification}. It states that given a tree $\T$ with overall reduction factor $\rho < 1$, the average complexity of identifying an unknown object using this tree is $O(\log_2 M)$. Recently, Nowak \cite{nowak} showed there are geometric conditions (incoherence and neighborliness) that also bound the worst-case depth of the tree to be $O(\log_2 M)$, assuming a uniform prior on objects.  The conditions imply that the reduction factors are close to $\frac12$ except possibly near the very bottom of the tree where they could be close to 1. 

\begin{cor}
\label{cor:object identification}
The expected number of queries required to identify an unknown object using a tree with overall reduction factor $\rho$ constructed on $(\B,\cP)$ is bounded above by
$$
\E[K] \le \frac{H(\cP)}{H(\rho)} \le \frac{\log_2 M}{H(\rho)}
$$
\end{cor}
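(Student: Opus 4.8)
The plan is to derive both inequalities directly from the second equality in Theorem~\ref{thm:object identification}, namely $\E[K] = H(\cP)\big/\sum_{a\in\cI}\tilde{\pi}_{\Theta_a}H(\rho_a)$. The key structural observation is that the weights $\tilde{\pi}_{\Theta_a} = \pi_{\Theta_a}/\sum_{r\in\cI}\pi_{\Theta_r}$ are nonnegative and sum to one over $a \in \cI$, so $\sum_{a\in\cI}\tilde{\pi}_{\Theta_a}H(\rho_a)$ is a genuine convex combination (weighted average) of the numbers $H(\rho_a)$, and in particular is bounded below by $\min_{a\in\cI}H(\rho_a)$.

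First I would record the elementary fact that the binary entropy function $\pi \mapsto H(\pi)$ is strictly decreasing on $[1/2,1]$, with $H(1/2)=1$ and $H(1)=0$. By Definition~\ref{defn:rhor} we have $1/2 \le \rho_a \le \rho$ for every internal node $a \in \cI$, so monotonicity gives $H(\rho_a) \ge H(\rho)$ for all such $a$. Averaging this inequality against the weights $\tilde{\pi}_{\Theta_a}$ and using $\sum_{a\in\cI}\tilde{\pi}_{\Theta_a}=1$ yields $\sum_{a\in\cI}\tilde{\pi}_{\Theta_a}H(\rho_a) \ge H(\rho)$. Substituting this lower bound into the denominator of the expression for $\E[K]$ from Theorem~\ref{thm:object identification} immediately produces $\E[K] \le H(\cP)/H(\rho)$, which is the first inequality.

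For the second inequality I would invoke the standard maximum-entropy bound: among all probability vectors on $M$ outcomes the uniform distribution maximizes Shannon entropy, so $H(\cP) \le \log_2 M$. (If one prefers, this is just Jensen's inequality for the concave function $\log_2$: $H(\cP) = \sum_i \pi_i \log_2(1/\pi_i) \le \log_2\big(\sum_i \pi_i \cdot \pi_i^{-1}\big) = \log_2 M$.) Since $\rho < 1$ we have $H(\rho) > 0$, so dividing through by $H(\rho)$ preserves the inequality and gives $H(\cP)/H(\rho) \le \log_2 M/H(\rho)$, completing the chain.

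I do not anticipate any real obstacle: the corollary is a direct consequence of Theorem~\ref{thm:object identification} combined with monotonicity of the binary entropy on $[1/2,1]$ and the maximum-entropy bound. The only points requiring a moment's care are confirming that the $\tilde{\pi}_{\Theta_a}$ sum to one (so that the denominator is an honest average rather than an arbitrary positive sum), which is immediate from their definition, and observing that $H(\rho)>0$ under the hypothesis $\rho<1$ so that the divisions are legitimate; the accompanying $O(\log_2 M)$ interpretation then follows at once for any fixed tree with overall reduction factor bounded away from $1$.
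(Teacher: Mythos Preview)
Your proposal is correct and follows essentially the same route as the paper's own proof: both start from the second equality in Theorem~\ref{thm:object identification}, use $\rho_a \le \rho$ on $[1/2,1]$ together with the monotonicity of the binary entropy to bound the denominator below by $H(\rho)$, and then invoke the maximum-entropy (concavity) bound $H(\cP)\le \log_2 M$ for the second inequality. Your write-up is slightly more explicit about the weights $\tilde{\pi}_{\Theta_a}$ summing to one and about $H(\rho)>0$, but the argument is the same.
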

\begin{proof}
Using the second equality in Theorem \ref{thm:object identification}, we get
\begin{eqnarray}
\label{eq:mu cor}
\E[K]  =  \frac{H(\cP )}{\sum_{a\in \cI}\tilde{\pi}_{\Theta_a}H(\rho_a)} 
 \leq  \frac{H(\cP)}{H(\rho)}  
 \leq  \frac{\log_2M}{H(\rho)} \nonumber
\end{eqnarray}
where the first inequality follows from the definition of $\rho$, $\rho \geq \rho_a \geq 0.5 , \forall a \in \cI$ and the last inequality follows from the concavity of the entropy function.
\end{proof}

In the sections that follow, we show how Theorem \ref{thm:object identification} and Algorithm \ref{algo_objectidentification} may be generalized, leading to principled strategies for group identification, query learning with group queries and query learning with persistent noise.

\section{Group Identification}
\label{sec:group identification}
We now move to the problem of group identification, where the goal is not to determine the object, but only the group to which the object belongs. Here, in addition to the binary matrix $\B$ and \emph{a priori} probability distribution $\cP$ on the objects, the group labels for the objects are also provided, where the groups are assumed to be disjoint.

We denote a query learning problem for group identification by $(\B,\cP,\vy)$, where $\vy = (y_1,\cdots,y_M)$ denotes the group labels of the objects, $y_i \in \{1,\cdots,m\}$. Let $\{\Theta^i\}_{i=1}^m$ be a partition of the object set $\Theta$, where $\Theta^i$ denotes the set of objects in $\Theta$ that belong to group $i$. It is important to note here that the group identification problem cannot be simply reduced to a standard query learning problem with groups $\{\Theta^1,\cdots,\Theta^m\}$ as meta ``objects,'' since the objects within a group need not respond the same to each query. For example, consider the toy example shown in Figure \ref{fig:toy network} where the objects $\theta_1,\theta_2$ and $\theta_3$ belonging to group $1$ cannot be considered as one single meta object as these objects respond differently to queries $q_1$ and $q_3$. 

In this context, we also note that GBS can fail to find a good solution for a group identification problem as it does not take the group labels into consideration while choosing queries. Once again, consider the toy example shown in Figure \ref{fig:toy network} where just one query (query $q_2$) is sufficient to identify the group of an unknown object, whereas GBS requires $2$ queries to identify the group when the unknown object is either $\theta_2$ or $\theta_4$, as shown in Figure \ref{fig:toy tree}. Hence, we develop a new strategy which accounts for the group labels when choosing the best query at each stage. 

\begin{figure*}[!t]
\renewcommand{\arraystretch}{1.3}
\begin{minipage}[b]{0.5\linewidth}
\centering
\begin{tabular}{|c|c c c|c|}
\hline
& $q_1$ & $q_2$ & $q_3$ & Group label, $y$ \\
\hline
$\theta_1$ & 0 & 1 & 1 & 1 \\
$\theta_2$ & 1 & 1 & 0 & 1 \\
$\theta_3$ & 0 & 1 & 0 & 1 \\
$\theta_4$ & 1 & 0 & 0 & 2 \\
\hline
\end{tabular}
\caption{\small \sl Toy Example 1}
\label{fig:toy network}
\end{minipage}
\hspace{0.15cm}
\begin{minipage}[b]{0.5\linewidth}
\centering
\begin{displaymath}
\xymatrix{
  &  *++[o][F-]{q_1} \ar[dl]_0 \ar[dr]^1 & &  \\
y = 1 & & *++[o][F-]{q_2} \ar[dl]_0 \ar[dr]^1 & \\
& y = 2 & & y = 1 \\
}
\end{displaymath}
\caption{\small \sl Decision tree constructed using GBS for group identification on toy example 1}
\label{fig:toy tree}
\end{minipage}
\end{figure*}

Note that when constructing a tree for group identification, a greedy, top-down algorithm terminates splitting when all the objects at the node belong to the same group. Hence, a tree constructed in this fashion can have multiple objects ending in the same leaf node and multiple leaves ending in the same group. 

For a tree with $L$ leaves, we denote by $\cL^i \subset \cL = \{1,\cdots,L\}$ the set of leaves that terminate in group $i$. Similar to $\Theta^i \subseteq \Theta$, we denote by $\Theta_a^i \subseteq \Theta_a$ the set of objects that belong to group $i$ at any internal node $a \in \cI$ in the tree.  Also, in addition to the reduction factors defined in Section \ref{sec:object identification}, we define a new set of reduction factors called the group reduction factors at each internal node.

\begin{defn}
\label{defn:grhor}
The group reduction factor of group $i$ at any internal node `$a$' in a decision tree is defined as
$\rho_a^i = \max(\pi_{\Theta_{l(a)}^i},\pi_{\Theta_{r(a)}^i})/\pi_{\Theta_{a}^i}$.
\end{defn}

Given $(\B,\cP,\vy)$, let $\cT(\B,\cP,\vy)$ denote the set of decision trees that can uniquely identify the groups of all objects in the set $\Theta$. For any decision tree $\T \in \cT(\B,\cP,\vy)$, let $\rho_a$ denote the reduction factor and let $\{\rho_a^i\}_{i=1}^m$ denote the set of group reduction factors at each of its internal nodes. Also, let $d_j$ denote the depth of leaf node $j \in \cL$ in the tree. Then the expected number of queries required to identify the group of an unknown object using the given tree is equal to 
\begin{eqnarray*}
\E[K] & = & \sum_{i=1}^m \prob(\theta \in \Theta^i) \E[K |\theta \in \Theta^i] \\
& = & \sum_{i=1}^m \pi_{\Theta^i} \left [ \sum_{j \in \cL^i} \frac{\pi_{\Theta_j}}{\pi_{\Theta^i}} d_j \right ] 
\end{eqnarray*}

\begin{thm}
\label{thm:group identification}
The expected number of queries required to identify the group of an object using a tree $\T$ with reduction factors $\{\rho_a \}_{a \in \cI}$ and group reduction factors $\{\rho_a^i\}_{i=1}^m, \forall a \in \cI$ constructed on $(\B,\cP,\vy)$, is given by
\begin{equation}
\label{eq:relation group identification}
\E[K] = H(\cP_{\vy} ) + \sum_{a \in \cI} \pi_{\Theta_{a}} \left [1 - H(\rho_a) + \sum_{i=1}^m \frac{\pi_{\Theta_a^i}}{\pi_{\Theta_a}} H(\rho_a^i) \right ] 
\end{equation}
where $\cP_{\vy}$ denotes the probability distribution of the object groups induced by the labels $\vy$, i.e. $\cP_{\vy} = (\pi_{\Theta^1},\cdots,\pi_{\Theta^m})$.
\end{thm}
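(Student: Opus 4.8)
The plan is to reduce the claimed formula to a single entropy identity over the nodes of the tree, and then verify that identity by two telescoping sums; Theorem~\ref{thm:object identification}'s first equality will then follow as the special case in which every group is a singleton.

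First I would put the expected cost in node form. Writing $d_j$ as the number of internal nodes on the root-to-leaf path to leaf $j$, we have $\E[K] = \sum_{j\in\cL}\pi_{\Theta_j}d_j = \sum_{j\in\cL}\pi_{\Theta_j}\sum_{a\in\cI}\ind{a\text{ is an ancestor of }j}$, and interchanging the two sums gives $\E[K] = \sum_{a\in\cI}\sum_{j\text{ below }a}\pi_{\Theta_j} = \sum_{a\in\cI}\pi_{\Theta_a}$, because the greedy top-down construction splits a node until it is group-pure, so the leaves below $a$ partition $\Theta_a$. Comparing this with the right-hand side of \eqref{eq:relation group identification}, the theorem is equivalent to the identity
\[
  \sum_{a\in\cI}\pi_{\Theta_a}H(\rho_a) \;=\; H(\cP_{\vy}) + \sum_{a\in\cI}\sum_{i=1}^m \pi_{\Theta_a^i}H(\rho_a^i),
\]
which I will call $(\star)$.

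To prove $(\star)$ I would evaluate each side by telescoping. Since $H$ is symmetric about $1/2$, $H(\rho_a) = H(\pi_{\Theta_{l(a)}}/\pi_{\Theta_a})$, and the elementary identity $\pi\,H(\beta/\pi) = \pi\log_2\pi - \beta\log_2\beta - (\pi-\beta)\log_2(\pi-\beta)$ (with the convention $0\log_2 0 = 0$, which also covers the degenerate cases $\beta\in\{0,\pi\}$ and $\pi=0$), applied with $\pi=\pi_{\Theta_a}$, $\beta=\pi_{\Theta_{l(a)}}$, $\pi-\beta=\pi_{\Theta_{r(a)}}$, rewrites $\pi_{\Theta_a}H(\rho_a)$ as $\pi_{\Theta_a}\log_2\pi_{\Theta_a} - \pi_{\Theta_{l(a)}}\log_2\pi_{\Theta_{l(a)}} - \pi_{\Theta_{r(a)}}\log_2\pi_{\Theta_{r(a)}}$. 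Summing over $a\in\cI$, each non-root node occurs exactly once as a child of an internal node, so the terms at internal non-root nodes cancel in pairs, the root term vanishes because $\pi_{\Theta_{\mathrm{root}}}=1$, and only the leaf terms remain: $\sum_{a\in\cI}\pi_{\Theta_a}H(\rho_a) = -\sum_{j\in\cL}\pi_{\Theta_j}\log_2\pi_{\Theta_j}$. Running the identical computation group-by-group (using that $\Theta_{l(a)}^i$ and $\Theta_{r(a)}^i$ partition $\Theta_a^i$) yields $\sum_{a\in\cI}\pi_{\Theta_a^i}H(\rho_a^i) = \pi_{\Theta^i}\log_2\pi_{\Theta^i} - \sum_{j\in\cL}\pi_{\Theta_j^i}\log_2\pi_{\Theta_j^i}$. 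Since each leaf $j$ is group-pure, $\Theta_j^i$ equals $\Theta_j$ when $j\in\cL^i$ and is empty otherwise, so $\sum_{j\in\cL}\pi_{\Theta_j^i}\log_2\pi_{\Theta_j^i} = \sum_{j\in\cL^i}\pi_{\Theta_j}\log_2\pi_{\Theta_j}$; summing over $i$ and using that $\{\cL^i\}_{i=1}^m$ partitions $\cL$ and $\{\Theta^i\}_{i=1}^m$ partitions $\Theta$, the right-hand side of $(\star)$ equals $H(\cP_{\vy}) + \sum_{i=1}^m\pi_{\Theta^i}\log_2\pi_{\Theta^i} - \sum_{j\in\cL}\pi_{\Theta_j}\log_2\pi_{\Theta_j} = -\sum_{j\in\cL}\pi_{\Theta_j}\log_2\pi_{\Theta_j}$, matching the left-hand side. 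This proves $(\star)$, hence the theorem.

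I do not anticipate a real obstacle — the argument is bookkeeping — but the steps needing care are keeping the root and leaf contributions straight in the two telescoping sums, and handling any node with $\pi_{\Theta_a^i}=0$ through $0\log_2 0 = 0$ (such a node contributes $0$ to both sides of the per-node identity). The three structural facts invoked — the leaves below a node partition its object set, every leaf is group-pure, and $\{\cL^i\}$ partitions $\cL$ — are all immediate consequences of the stopping rule of the greedy top-down construction described just before the theorem.
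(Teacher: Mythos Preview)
Your argument is correct. The reduction to the identity $(\star)$ via $\E[K]=\sum_{a\in\cI}\pi_{\Theta_a}$ is clean, and the two telescoping sums are carried out accurately; the handling of the root (nonzero in the per-group sum, zero in the total sum) and of pure leaves is exactly right.

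The paper proceeds differently: it derives Theorem~\ref{thm:group identification} as a special case of the more general Theorem~\ref{thm:group identification group queries}, whose proof is by induction on subtrees. The inductive step is the recursive decomposition of Lemma~\ref{lem:lemma group identification}, namely
\[
\pi_{\Theta_a}[\mu_a-H_a]=\pi_{\Theta_{l(a)}}[\mu_{l(a)}-H_{l(a)}]+\pi_{\Theta_{r(a)}}[\mu_{r(a)}-H_{r(a)}]+\pi_{\Theta_a}\Bigl[1-H(\rho_a)+\sum_{i}\tfrac{\pi_{\Theta_a^i}}{\pi_{\Theta_a}}H(\rho_a^i)\Bigr],
\]
which is precisely the local (one-node) version of your telescoping identity. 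So the two proofs rest on the same elementary computation; you sum it globally over $\cI$ and read off the surviving boundary terms, while the paper propagates it from the leaves up by induction. Your route is more direct and self-contained for this theorem. The paper's inductive organization, on the other hand, carries over without change to the setting with query groups (Theorems~\ref{thm:group queries} and~\ref{thm:group identification group queries}), where each internal node has a random branching governed by the user's choice and the bookkeeping of reaching-probabilities $\p_a$ is absorbed naturally into the recursion; and it also yields the impure-leaf variant (Theorem~\ref{thm:impurity based}) simply by changing the base case.
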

\begin{proof}
Special case of Theorem \ref{thm:group identification group queries} below.
\end{proof}

The above theorem states that given a query learning problem for group identification $(\B,\cP,\vy)$, the expected number of queries required to identify the group of an unknown object is lower bounded by the entropy of the probability distribution of the groups. It also follows from the above result that this lower bound is achieved iff there exists a perfectly balanced tree (i.e. $\rho = 0.5$) with the group reduction factors equal to $1$ at every internal node in the tree. Also, note that Theorem \ref{thm:object identification} is a special case of this theorem where each group has size $1$ leading to $\rho_a^i = 1$ for all groups at every internal node. 

Using Theorem \ref{thm:group identification}, the problem of finding a decision tree with minimum $\E[K]$ can be formulated as the following optimization problem
\begin{eqnarray}
\label{eq:optimization group identification}
& \underset{\T \in \cT(\B,\cP,\vy)}{\operatorname{\min}}  \sum_{a \in \cI} \pi_{\Theta_a}\left [1 - H(\rho_a) + \sum_{i=1}^m \frac{\pi_{\Theta_a^i}}{\pi_{\Theta_a}} H(\rho_a^i) \right ]  &
\end{eqnarray}

Note that here both the reduction factor $\rho_a$ and the group reduction factors $\{ \rho_a^i\}_{i=1}^m$ depend on the query chosen at node `$a$'. Also, the above optimization problem being a generalized version of the optimization problem in (\ref{eq:optimization object identification}) is NP-complete. Hence, we propose a suboptimal approach to solve the above optimization problem where we optimize the objective function locally instead of globally. We take a top-down approach and minimize the objective function by minimizing the term $C_a := \left [ 1 - H(\rho_a) + \sum_{i=1}^m \frac{\pi_{\Theta_a^i}}{\pi_{\Theta_a}} H(\rho_a^i) \right ]$ at each internal node, starting from the root node. The algorithm can be summarized as shown in Algorithm \ref{algo_groupidentification} below. This algorithm is referred to as GISA (Group Identification Splitting Algorithm) in the rest of this paper.

\restylealgo{boxed}
\begin{algorithm}
\dontprintsemicolon
\caption{Greedy decision tree algorithm for group identification \label{algo_groupidentification}}

\textbf{\underline{Group Identification Splitting Algorithm (GISA)}} \;
\BlankLine
\textbf{Initialization :}  \emph{Let the leaf set consist of the root node} \;
\SetLine
\While{some leaf node `$a$' has more than one group of objects}{
\For{each query $q_j \in Q \setminus Q_a$}{
Compute $\{\rho_{a}^i\}_{i=1}^m$ and $\rho_a$ produced by making a split with query $q_j$\;
Compute the cost $C_a(j)$ of making a split with query $q_j$\;
}
Choose a query with the least cost $C_a$ at node `$a$'\;
Form child nodes $l(a),r(a)$\;
}
\end{algorithm}

Note that the objective function in this algorithm consists of two terms. The first term $[1 - H(\rho_a)]$ favors queries that evenly distribute the probability mass of the objects at node `$a$' to its child nodes (regardless of the group) while the second term $\sum_i \frac{\pi_{\Theta_a^i}}{\pi_{\Theta_a}} H(\rho_a^i)$ favors queries that transfer an entire group of objects to one of its child nodes. 

\subsection{Connection to Impurity-based Decision Tree Induction}
\label{sec:connection to impurity-based DT induction}

As a brief digression, in this section we show a connection between the above algorithm and impurity-based decision tree induction. In particular, we show that the above algorithm is equivalent to the decision tree splitting algorithm used in the C$4.5$ software package \cite{quinlan}. Before establishing this result, we briefly review the multi-class classification setting where impurity-based decision tree induction is popularly used. 

In the multi-class classification setting, the input is training data $\vx_1,\cdots,\vx_M$ sampled from some input space (with an underlying probability distribution) along with their class labels, $y_1,\cdots,y_M$ and the task is to construct a classifier with the least probability of misclassification. Decision tree classifiers are grown by maximizing an impurity-based objective function at every internal node to select the best classifier from a set of base classifiers. These base classifiers can vary from simple axis-orthogonal splits to more complex non-linear classifiers. The impurity-based objective function is
\begin{eqnarray}
\label{eq:impurity based objective}
I(\Theta_a ) - \left [ \frac{\pi_{\Theta_{l(a)}}}{\pi_{\Theta_a}} I(\Theta_{l(a)} ) + \frac{\pi_{\Theta_{r(a)}}}{\pi_{\Theta_a}} I(\Theta_{r(a)} ) \right ],
\end{eqnarray} 
which represents the decrease in impurity resulting from split `$a$'. Here $I(\Theta_a)$ corresponds to the measure of impurity in the input subspace at node `$a$' and $\pi_{\Theta_a}$ corresponds to the probability measure of the input subspace at node `$a$'.

Among the various impurity functions suggested in literature \cite{kearns,takimoto}, the entropy measure used in the C$4.5$ software package \cite{quinlan} is popular. In the multi-class classification setting with $m$ different class labels, this measure is given by
\begin{eqnarray}
\label{eq:entropy measure}
& I(\Theta_a) = - \sum_{i=1}^m \frac{\pi_{\Theta_a^i}}{\pi_{\Theta_a}} \log \frac{\pi_{\Theta_a^i}}{\pi_{\Theta_a}} &
\end{eqnarray}
where $\pi_{\Theta_a},\pi_{\Theta_a^i}$ are empirical probabilities based on the training data.

Similar to a query learning problem for group identification, the input here is a binary matrix $\B$ with $b_{ij}$ denoting the binary label produced by base classifier $j$ on training sample $i$, and a probability distribution $\cP$ on the training data along with their class labels $\vy$. But unlike in a query learning problem where the nodes in a tree are not terminated until all the objects belong to the same group, the leaf nodes here are allowed to contain some impurity in order to avoid overfitting. The following result extends Theorem \ref{thm:group identification} to the case of impure leaf nodes.

\begin{thm}
\label{thm:impurity based}
The expected depth of a leaf node in a decision tree classifier $\T$ with reduction factors $\{\rho_a \}_{a \in \cI}$ and class reduction factors $\{\rho_a^i\}_{i=1}^m, \forall a \in \cI$ constructed on a multi-class classification problem $(\B,\cP,\vy)$, is given by
\begin{equation}
\label{eq:relation impurity based}
\E[K] = H(\cP_{\vy} ) + \sum_{a \in \cI} \pi_{\Theta_{a}} \left [1 - H(\rho_a) + \sum_{i=1}^m \frac{\pi_{\Theta_a^i}}{\pi_{\Theta_a}} H(\rho_a^i) \right ] - \sum_{a \in \cL} \pi_{\Theta_a} I(\Theta_a) 
\end{equation}
where $\cP_{\vy}$ denotes the probability distribution of the classes induced by the class labels $\vy$, i.e., $\cP_{\vy} = (\pi_{\Theta^1},\cdots,\pi_{\Theta^m})$ and $I(\Theta_a)$ denotes the impurity in leaf node `$a$' given by (\ref{eq:entropy measure}).
\end{thm}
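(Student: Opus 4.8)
The plan is to deduce (\ref{eq:relation impurity based}) from Theorem~\ref{thm:group identification} by ``completing'' the classifier to a group-pure tree. Given $\T$ with its possibly impure leaves $\cL$, I would attach to each leaf $j\in\cL$ an auxiliary binary subtree $\T_j$ on the object set $\Theta_j$ whose leaves are all group-pure (for an impure $j$, take for instance a caterpillar that peels off one group at a time; for a pure $j$, take the trivial one-node subtree). Write $\hat\T$ for the resulting tree, let $\hat\cI = \cI\cup\bigcup_{j\in\cL}\cI(\T_j)$ be the disjoint union of its internal-node sets, and set $C_a := 1 - H(\rho_a) + \sum_{i=1}^m\frac{\pi_{\Theta_a^i}}{\pi_{\Theta_a}}H(\rho_a^i)$. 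Since a point reaching leaf $j$ of $\T$ then descends through $\T_j$, the expected depths decompose as $\E_{\hat\T}[K] = \E_{\T}[K] + \sum_{j\in\cL}\pi_{\Theta_j}\,\E_{\T_j}[K]$, where $\E_{\T_j}[K]$ is the expected depth inside $\T_j$ taken with respect to the prior $\pi$ restricted to $\Theta_j$ and renormalized (this term vanishes for pure leaves).

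Next I would apply Theorem~\ref{thm:group identification} twice (modulo the caveat addressed below). Applied to $\hat\T$, which has group-pure leaves, and splitting the sum over $\hat\cI$ into the part over $\cI$ and the parts over the $\cI(\T_j)$, it gives $\E_{\hat\T}[K] = H(\cP_\vy) + \sum_{a\in\cI}\pi_{\Theta_a}C_a + \sum_{j\in\cL}\sum_{a\in\cI(\T_j)}\pi_{\Theta_a}C_a$. Applied to each $\T_j$ as a standalone group-identification problem on $\Theta_j$ with the inherited group labels, it gives an identity whose leading ``entropy of the induced group distribution'' term is $-\sum_i\frac{\pi_{\Theta_j^i}}{\pi_{\Theta_j}}\log_2\frac{\pi_{\Theta_j^i}}{\pi_{\Theta_j}}$, which is exactly the C$4.5$ impurity $I(\Theta_j)$ of (\ref{eq:entropy measure}); since the reduction factor and group reduction factors at any node are ratios and hence unchanged by renormalization, multiplying that identity through by $\pi_{\Theta_j}$ yields $\pi_{\Theta_j}\E_{\T_j}[K] = \pi_{\Theta_j}I(\Theta_j) + \sum_{a\in\cI(\T_j)}\pi_{\Theta_a}C_a$. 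Substituting both expansions into the decomposition of $\E_{\hat\T}[K]$ from the first paragraph, the subtree sums $\sum_{a\in\cI(\T_j)}\pi_{\Theta_a}C_a$ cancel in pairs, leaving $\E_{\T}[K] = H(\cP_\vy) + \sum_{a\in\cI}\pi_{\Theta_a}C_a - \sum_{j\in\cL}\pi_{\Theta_j}I(\Theta_j)$, which is (\ref{eq:relation impurity based}) once one notes that pure leaves contribute $I(\Theta_j)=0$.

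The point requiring care is that the auxiliary subtrees $\T_j$ are built from ``virtual'' splits that need not lie in the original query set $Q$, so $\hat\T$ and the $\T_j$ are not literally members of $\cT(\B,\cP,\vy)$ and Theorem~\ref{thm:group identification} does not apply verbatim. This is not a genuine obstacle: identity (\ref{eq:relation group identification}) depends only on the node probability masses and follows from a query-independent telescoping of the node identity $\pi_{\Theta_a}H(\rho_a) = \pi_{\Theta_a}\log_2\pi_{\Theta_a} - \pi_{\Theta_{l(a)}}\log_2\pi_{\Theta_{l(a)}} - \pi_{\Theta_{r(a)}}\log_2\pi_{\Theta_{r(a)}}$, together with the analogous per-group telescoping, over any binary tree with group-pure leaves; one either invokes this telescoping directly or simply restates Theorem~\ref{thm:group identification} for arbitrary group-pure binary trees. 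In fact an equally short alternative is to prove (\ref{eq:relation impurity based}) from scratch by that same telescoping: the only change from the pure-leaf case is that at an impure leaf $j$ the quantity $-\sum_i\pi_{\Theta_j^i}\log_2\pi_{\Theta_j^i}$ no longer collapses to $-\pi_{\Theta_j}\log_2\pi_{\Theta_j}$ but equals $\pi_{\Theta_j}I(\Theta_j)-\pi_{\Theta_j}\log_2\pi_{\Theta_j}$ by (\ref{eq:entropy measure}), and the surplus $\pi_{\Theta_j}I(\Theta_j)$ terms aggregate into the correction $-\sum_{a\in\cL}\pi_{\Theta_a}I(\Theta_a)$. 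Throughout, zero-probability groups and leaves are handled by the convention $\lim_{\pi\to 0}\pi\log_2\pi=0$ fixed in Section~\ref{sec:notation}.
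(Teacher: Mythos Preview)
Your argument is correct, but it takes a genuinely different route from the paper's. The paper proves (\ref{eq:relation impurity based}) directly by induction on subtrees: it establishes a one-step decomposition (Lemma~\ref{lem:lemma group identification}) showing that $\pi_{\Theta_a}[\mu_a - H_a]$ equals the corresponding quantities at the two children plus the node cost $\pi_{\Theta_a}C_a$, and then observes that the base case at a leaf contributes exactly $-\pi_{\Theta_a}I(\Theta_a)$ since $H_a = I(\Theta_a)$ there; unwinding the recursion gives the formula. Your approach is instead a reduction: you graft auxiliary group-pure subtrees onto each impure leaf, invoke Theorem~\ref{thm:group identification} on the completed tree and on each graft separately, and cancel the grafted contributions, leaving the leaf-impurity correction as the residual entropy $I(\Theta_j)$ of the restricted group distribution. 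This is a clean conceptual explanation of \emph{why} the extra $-\sum_{a\in\cL}\pi_{\Theta_a}I(\Theta_a)$ term appears---it is precisely the expected cost that would be incurred by continuing each impure leaf to purity---whereas the paper's induction is more self-contained and avoids the need to justify applying Theorem~\ref{thm:group identification} to trees built from splits outside $Q$. You handle that last point adequately, and your closing remark that one could alternatively telescope directly is essentially the paper's own argument in different clothing.
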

\begin{proof}
The proof is given in Appendix $\mathrm{I}$.
\end{proof}

The only difference compared to Theorem \ref{thm:group identification} is the last term, which corresponds to the average impurity in the leaf nodes.  

\begin{thm}
\label{thm:connection to impurity based DT induction}
At every internal node in a tree, minimizing the objective function $C_a := 1 - H(\rho_a) + \sum_{i=1}^m \frac{\pi_{\Theta_a^i}}{\pi_{\Theta_a}} H(\rho_a^i)$ is equivalent to maximizing $I(\Theta_a ) - \left [ \frac{\pi_{\Theta_{l(a)}}}{\pi_{\Theta_a}} I(\Theta_{l(a)} ) + \frac{\pi_{\Theta_{r(a)}}}{\pi_{\Theta_a}} I(\Theta_{r(a)} ) \right ]$ with entropy measure as the impurity function.
\end{thm}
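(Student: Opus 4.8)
The plan is to expand both objective functions at an internal node `$a$' into sums involving only the quantities $\pi_{\Theta_a}$, $\pi_{\Theta_{l(a)}}$, $\pi_{\Theta_{r(a)}}$, $\pi_{\Theta_a^i}$, $\pi_{\Theta_{l(a)}^i}$, $\pi_{\Theta_{r(a)}^i}$, and then show that the difference $C_a - \bigl(\text{impurity decrease}\bigr)$ is a constant not depending on the query chosen at node `$a$'. Since minimizing $C_a$ is the same as maximizing $-C_a$, and $-C_a$ differs from the impurity decrease by a query-independent constant, this establishes the equivalence. So the real content is an algebraic identity.

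First I would write out the impurity decrease using the entropy measure (\ref{eq:entropy measure}). Note that $\pi_{\Theta_a} I(\Theta_a) = -\sum_i \pi_{\Theta_a^i} \log(\pi_{\Theta_a^i}/\pi_{\Theta_a}) = -\sum_i \pi_{\Theta_a^i}\log \pi_{\Theta_a^i} + \pi_{\Theta_a}\log\pi_{\Theta_a}$, and similarly for $l(a)$ and $r(a)$. Multiplying the impurity decrease through by $\pi_{\Theta_a}$ (a query-independent positive factor, so it does not affect the argmax), I would collect the resulting expression into a ``$\log$ of object-group masses'' part and a ``$\log$ of node masses'' part. The key bookkeeping fact is that $\pi_{\Theta_a^i} = \pi_{\Theta_{l(a)}^i} + \pi_{\Theta_{r(a)}^i}$ and $\pi_{\Theta_a} = \pi_{\Theta_{l(a)}} + \pi_{\Theta_{r(a)}}$, so the $-\sum_i \pi_{\Theta_a^i}\log\pi_{\Theta_a^i}$ terms and the $+\pi_{\Theta_a}\log\pi_{\Theta_a}$ terms from the three impurity evaluations recombine; after cancellation the $\pi_{\Theta_a}$-weighted impurity decrease should equal something like $\sum_i \pi_{\Theta_a^i}\bigl[\text{binary-entropy-type expression in the group split}\bigr] - \pi_{\Theta_a}\bigl[\text{binary-entropy-type expression in the overall split}\bigr]$, up to sign.

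Next I would do the matching expansion on the $C_a$ side. By definition $\rho_a = \max(\pi_{\Theta_{l(a)}},\pi_{\Theta_{r(a)}})/\pi_{\Theta_a}$, so $H(\rho_a)$ is exactly the binary entropy of the proportions in which the query splits $\Theta_a$; hence $\pi_{\Theta_a}[1 - H(\rho_a)]$ expands (using $H(\pi) = -\pi\log\pi - (1-\pi)\log(1-\pi)$ and $\lim_{\pi\to0}\pi\log\pi = 0$) into $\pi_{\Theta_a} + \pi_{\Theta_{l(a)}}\log(\pi_{\Theta_{l(a)}}/\pi_{\Theta_a}) + \pi_{\Theta_{r(a)}}\log(\pi_{\Theta_{r(a)}}/\pi_{\Theta_a})$; the $\max$ disappears because $H$ is symmetric about $1/2$. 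Likewise $\pi_{\Theta_a^i} H(\rho_a^i)$ expands into $-\pi_{\Theta_{l(a)}^i}\log(\pi_{\Theta_{l(a)}^i}/\pi_{\Theta_a^i}) - \pi_{\Theta_{r(a)}^i}\log(\pi_{\Theta_{r(a)}^i}/\pi_{\Theta_a^i})$. Summing $\sum_i \frac{\pi_{\Theta_a^i}}{\pi_{\Theta_a}}H(\rho_a^i)$ and multiplying $C_a$ by $\pi_{\Theta_a}$, I would line this up term-by-term against the impurity-decrease expansion from the previous paragraph and read off that $\pi_{\Theta_a} C_a = -\pi_{\Theta_a}\bigl[\text{impurity decrease}\bigr] + \pi_{\Theta_a}$, i.e.\ $C_a = 1 - \bigl[I(\Theta_a) - \frac{\pi_{\Theta_{l(a)}}}{\pi_{\Theta_a}}I(\Theta_{l(a)}) - \frac{\pi_{\Theta_{r(a)}}}{\pi_{\Theta_a}}I(\Theta_{r(a)})\bigr]$. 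Since the constant $1$ and the factor $\pi_{\Theta_a}$ are both independent of the query at node `$a$', minimizing $C_a$ is equivalent to maximizing the impurity decrease, which is the claim.

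The main obstacle I anticipate is purely clerical rather than conceptual: keeping the signs and the $\pi_{\Theta_a}$-normalizations consistent across the three impurity terms and the two pieces of $C_a$, and correctly handling the degenerate cases where some $\pi_{\Theta_{l(a)}^i}$ or $\pi_{\Theta_{r(a)}^i}$ vanishes (these are absorbed by the $\lim_{\pi\to0}\pi\log\pi=0$ convention already adopted in the paper, and in fact correspond exactly to a group being transferred wholesale to one child, i.e.\ $\rho_a^i = 1$, $H(\rho_a^i)=0$). One should also note at the outset that multiplying an objective by the fixed positive quantity $\pi_{\Theta_a}$ and adding the fixed constant $1$ preserves the location of the optimum over the choice of query, which is all that ``equivalent'' requires here. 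Given Theorem~\ref{thm:impurity based} (proved in Appendix I), an alternative and shorter route is available: the right-hand side of (\ref{eq:relation impurity based}) shows that the full tree objective decomposes into the per-node sum $\sum_a \pi_{\Theta_a} C_a$ minus the leaf-impurity sum, and a standard telescoping/greedy argument then says that reducing $\sum_a \pi_{\Theta_a} C_a$ at node `$a$' is the same as increasing the impurity drop at `$a$'; I would mention this as the conceptually cleanest proof and relegate the direct identity above to a one-line verification.
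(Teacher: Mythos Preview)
Your proposal is correct and follows essentially the same route as the paper: the paper's proof simply quotes relation~(\ref{eq:entropy decomposition group identification}) from Lemma~\ref{lem:lemma group identification}, divides by $\pi_{\Theta_a}$, and reads off exactly the identity you derive, namely that the impurity decrease equals $H(\rho_a) - \sum_i \frac{\pi_{\Theta_a^i}}{\pi_{\Theta_a}}H(\rho_a^i) = 1 - C_a$. Your direct expansion is precisely the content of that lemma's entropy decomposition, so the two arguments coincide; the paper is just shorter because the algebra was already packaged in Lemma~\ref{lem:lemma group identification}.
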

\begin{proof}
The proof is given in Appendix $\mathrm{I}$.
\end{proof}

Therefore, greedy optimization of (\ref{eq:relation impurity based}) at internal nodes corresponds to greedy optimization of impurity. Also, note that optimizing (\ref{eq:relation impurity based}) at a leaf assigns the majority vote class label. Therefore, we conclude that impurity-based decision tree induction with entropy as the impurity measure amounts to a greedy optimization of the expected depth of a leaf node in the tree. Also, Theorem \ref{thm:impurity based} allows us to interpret impurity based splitting algorithms for multiclass decision trees in terms of reduction factors, which also appears to be a new insight.

\section{Object identification under group queries}
\label{sec:group queries}
 
In this section, we return to the problem of object identification. The input is a binary matrix $\B$ denoting the relationship between $M$ objects and $N$ queries, where the queries are grouped \emph{a priori} into $n$ disjoint categories, along with the \emph{a priori} probability distribution $\cP$ on the objects. However, unlike the decision trees constructed in the previous two sections where the end user (for e.g., a first responder) has to go through a fixed set of questions as dictated by the decision tree, here, the user is offered more flexibility in choosing the questions at each stage. More specifically, the decision tree suggests a query group from the $n$ groups instead of a single query at each stage, and the user can choose a query to answer from the suggested query group.   

A decision tree constructed with a group of queries at each stage has multiple branches at each internal node, corresponding to the size of the query group. Hence, a tree constructed in this fashion has multiple leaves ending in the same object. While traversing this decision tree, the user chooses the path at each internal node by selecting the query to answer from the given list of queries. Figure \ref{fig:decision tree group queries} demonstrates a decision tree constructed in this fashion for the toy example shown in Figure \ref{fig:toy example 2}. The circled nodes correspond to the internal nodes, where each internal node is associated with a query group. The numbers associated with a dashed edge correspond to the probability that the user will choose that path over the others. The probability of reaching a node $a \in \cI$ in the tree given $\theta \in \Theta_a$ is given by the product of the probabilities on the dashed edges along the path from the root node to that node, for example, the probability of reaching leaf node $\theta_1^*$ given $\theta = \theta_1$ in Figure \ref{fig:decision tree group queries} is $0.45$. The problem now is to select the query categories that will identify the object most efficiently, on average.

\begin{figure*}[!t]
\renewcommand{\arraystretch}{1.3}
\begin{minipage}[b]{0.25\linewidth}
\centering
\begin{tabular}{|c|| c c | c c|}
\hline
& \multicolumn{2}{c|}{$Q^1$} & \multicolumn{2}{c|}{$Q^2$} \\
\cline{2-5}
& 0.5 & 0.5 & 0.9 & 0.1 \\
\cline{2-5}
& $q_1$ & $q_2$ & $q_3$ & $q_4$  \\
\hline \hline
$\theta_1$ & 0 & 1 & 1 & 0 \\
$\theta_2$ & 1 & 0 & 1 & 1 \\
$\theta_3$ & 1 & 1 & 0 & 1 \\
\hline
\end{tabular}
\caption{\small \sl Toy Example 2}
\label{fig:toy example 2}
\end{minipage}
\hspace{0.5cm}
\begin{minipage}[b]{0.75\linewidth}
\centering
\begin{displaymath}
\xymatrix{
 & & &  *++[o][F-]{Q^2} \ar@{-->}[dll]_{0.9} \ar@{-->}[drr]^{0.1} & & & & \\
 & *+[F]{q_3} \ar[dl]_0 \ar[dr]^1 & & & & *+[F]{q_4} \ar[dl]_0 \ar[dr]^1 &\\
 \theta_3 &  & *++[o][F-]{Q^1} \ar@{-->}[dl]_{0.5} \ar@{-->}[dr]^{0.5} & & \theta_1 & & *++[o][F-]{Q^2} \ar@{-->}[d]_1 & \\
  & *+[F]{q_1} \ar[dl]_0 \ar[d]^1 & & *+[F]{q_2} \ar[d]_0 \ar[dr]^1 & & & *+[F]{q_3} \ar[dl]_0 \ar[dr]^1 & \\
 \theta_1^* & \theta_2 & & \theta_2 & \theta_1 & \theta_3 & & \theta_2 \\
}
\end{displaymath}
\caption{\small \sl Decision tree constructed on toy example 2 for object identification under group queries}
\label{fig:decision tree group queries}
\end{minipage}
\end{figure*}

In addition to the terminology defined in Sections \ref{sec:notation} and \ref{sec:object identification}, we also define $\vz = (z_1,\cdots,z_N)$ to be the group labels of the queries, where $z_j \in \{1,\cdots,n\}, \forall j = 1,\cdots, N$. Let $\{Q^i\}_{i=1}^n$ be a partition of the query set $Q$, where $Q^i$ denotes the set of queries in $Q$ that belong to group $i$. Similarly, at any node `$a$' in a tree, let $Q_a^i$ and $\overline{Q_a^i}$ denote the set of queries in $Q_a$ and $Q \setminus Q_a$ that belong to group $i$ respectively. Let $p_i(q)$ be the \emph{a priori} probability of the user selecting query $q \in Q^i$ at any node with query group $i$ in the tree, where $\sum_{q \in Q^i} p_i(q) = 1$. In addition, at any node `$a$' in the tree, the function $p_i(q) = 0, \forall q \in Q_a^i$, since the user would not choose a query which has already been answered, in which case $p_i(q)$ is renormalized. In our experiments we take $p_i(q)$ to be uniform on $\overline{Q_a^i}$. Finally, let $z_a \in \{1,\cdots,n\}$ denote the query group selected at an internal node `$a$' in the tree and let $\p_a$ denote the probability of reaching that node given $\theta \in \Theta_a$.

We denote a query learning problem for object identification with query groups by $(\B,\cP,\vz,\vp)$. Given $(\B,\cP,\vz,\vp)$, let $\cT(\B,\cP,\vz,\vp)$ denote the set of decision trees that can uniquely identify all the objects in the set $\Theta$ with query groups at each internal node. For a decision tree $\T \in \cT(\B,\cP,\vz,\vp)$, let $\{\rho_{a}(q)\}_{q \in Q^{z_a}}$ denote the reduction factors of all the queries in the query group at each internal node $a \in \cI$ in the tree, where the reduction factors are treated as functions with input being a query. 

Also, for a tree with $L$ leaves, let $\cL^i \subset \cL = \{1,\cdots,L\}$ denote the set of leaves terminating in object $\theta_i$ and let $d_j$ denote the depth of leaf node $j \in \cL$. Then, the expected number of queries required to identify the unknown object using the given tree is equal to
\begin{eqnarray*}
\E[K] &=& \sum_{i=1}^M \prob(\theta = \theta_i) \E[K|\theta = \theta_i] \\
& = & \sum_{i=1}^M \pi_i \left [ \sum_{j \in \cL^i} \p_j d_j \right ]
\end{eqnarray*} 

\begin{thm}
\label{thm:group queries}
The expected number of queries required to identify an object using a tree $\T \in \cT(\B,\cP,\vz,\vp)$ is given by
\begin{equation}
\label{eq:group queries}
\E[K] = H(\cP) + \sum_{a \in \cI} \p_a \pi_{\Theta_{a}} \left [1 - \sum_{q \in Q^{z_a}} p_{z_a}(q) H(\rho_{a}(q) ) \right ] 
\end{equation}
\end{thm}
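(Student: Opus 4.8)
The plan is to rewrite $\E[K]$ as a sum over internal nodes of the probability that each node is reached, thereby reducing equation~(\ref{eq:group queries}) to a single entropy identity, and then to prove that identity by a bottom‑up induction on the tree using the grouping (chain) rule for Shannon entropy. As a first step, note that on any run the number $K$ of queries asked equals the number of internal nodes on the (random) root‑to‑leaf path, so $\E[K] = \sum_{a\in\cI}\prob(a\text{ is reached})$. Node $a$ is reached exactly when $\theta\in\Theta_a$ \emph{and} the user's successive query selections steer the path into $a$; the latter event has probability $\p_a$ and, since the $p_i(q)$ are fixed \emph{a priori}, is independent of which object of $\Theta_a$ is the true one. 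Hence $\prob(a\text{ reached}) = \p_a\,\pi_{\Theta_a}$ and $\E[K] = \sum_{a\in\cI}\p_a\,\pi_{\Theta_a}$. (Equivalently one may start from $\E[K]=\sum_i\pi_i\sum_{j\in\cL^i}\p_j d_j$, expand each $d_j$ as a sum of indicators over its internal ancestors, and interchange the order of summation.) Writing $1 = \bigl[1-\sum_{q\in Q^{z_a}}p_{z_a}(q)H(\rho_a(q))\bigr] + \sum_{q\in Q^{z_a}}p_{z_a}(q)H(\rho_a(q))$ inside this sum, (\ref{eq:group queries}) becomes equivalent to
\[
\sum_{a\in\cI}\p_a\,\pi_{\Theta_a}\sum_{q\in Q^{z_a}}p_{z_a}(q)\,H(\rho_a(q)) \;=\; H(\cP).
\]

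To prove this identity I would set up an inductive statement over subtrees. For a node $a$, let $\cI_a$ be the internal nodes of the subtree rooted at $a$, let $\cP_a$ be the object distribution conditioned on $\Theta_a$ (renormalized so that $\sum_{\theta_i\in\Theta_a}\pi_i/\pi_{\Theta_a}=1$), and let $\p^{(a)}_b$ be the probability of reaching $b$ when the process is started at $a$ (the product of user‑selection probabilities along the path from $a$ to $b$, with $\p^{(a)}_a=1$). The claim is that for every node $a$,
\[
\pi_{\Theta_a}H(\cP_a) \;=\; \sum_{b\in\cI_a}\p^{(a)}_b\,\pi_{\Theta_b}\sum_{q\in Q^{z_b}}p_{z_b}(q)\,H(\rho_b(q)),
\]
which, applied at the root ($\pi_{\Theta_a}=1$, $\cP_a=\cP$, $\p^{(a)}_b=\p_b$), yields the identity above. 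The base case is a leaf, where $\Theta_a$ is a singleton, $\cI_a=\emptyset$, and both sides vanish. For the inductive step at an internal node $a$ carrying query group $z_a$, let $l(a,q),r(a,q)$ be the two children obtained by answering $q\in Q^{z_a}$, so $\pi_{\Theta_{l(a,q)}}+\pi_{\Theta_{r(a,q)}}=\pi_{\Theta_a}$. For each fixed $q$ the grouping rule for entropy gives $H(\cP_a) = H(\rho_a(q)) + \tfrac{\pi_{\Theta_{l(a,q)}}}{\pi_{\Theta_a}}H(\cP_{l(a,q)}) + \tfrac{\pi_{\Theta_{r(a,q)}}}{\pi_{\Theta_a}}H(\cP_{r(a,q)})$, since the binary entropy $H(\rho_a(q))$ is precisely the entropy of the split $(\pi_{\Theta_{l(a,q)}},\pi_{\Theta_{r(a,q)}})/\pi_{\Theta_a}$. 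Averaging this over $q$ with the weights $p_{z_a}(q)$ (which sum to $1$) keeps the left side equal to $H(\cP_a)$ and produces $\sum_q p_{z_a}(q)H(\rho_a(q))$ plus a convex combination of the child entropies; substituting the inductive hypothesis for each child subtree, using the multiplicativity $\p^{(a)}_b = p_{z_a}(q)\,\p^{(c)}_b$ for $b$ below the child $c\in\{l(a,q),r(a,q)\}$, using that $\cI_a\setminus\{a\}$ is the disjoint union of the internal‑node sets of the $2\,|Q^{z_a}|$ child subtrees, and multiplying through by $\pi_{\Theta_a}$ (the $b=a$ term being supplied by $\p^{(a)}_a=1$) closes the induction. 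This also recovers the first equality of Theorem~\ref{thm:object identification} as the special case in which every query is its own group (then $p_{z_a}(q)=1$ and $\p_a=1$).

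The main obstacle is not analytic but bookkeeping: because a single object can label many distinct leaves, one must take care in verifying that $\prob(a\text{ reached})=\p_a\pi_{\Theta_a}$, that the path probabilities $\p$ factor multiplicatively through the chosen query at each internal node, and that the internal‑node sets of the child subtrees genuinely partition $\cI_a\setminus\{a\}$. Once this structure is in place, the rest is a branch‑by‑branch application of the entropy chain rule followed by an average against the user's query‑selection distribution.
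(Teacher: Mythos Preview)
Your proposal is correct and is essentially the paper's own argument, merely reorganized. The paper derives Theorem~\ref{thm:group queries} as the special case of Theorem~\ref{thm:group identification group queries} in which every object forms its own group (so all group reduction factors equal $1$), and that theorem is proved by exactly the same bottom-up subtree induction and entropy chain-rule step you describe (this is the content of Lemma~\ref{lem:lemma group identification} averaged over $q$); the only cosmetic difference is that the paper carries the quantity $\pi_{\Theta_a}(\mu_a - H_a)$ through a single induction, whereas you first establish $\E[K]=\sum_{a\in\cI}\p_a\pi_{\Theta_a}$ and then prove the entropy identity $H(\cP)=\sum_{a\in\cI}\p_a\pi_{\Theta_a}\sum_q p_{z_a}(q)H(\rho_a(q))$ separately.
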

\begin{proof}
Special case of Theorem \ref{thm:group identification group queries} below.
\end{proof}

Note from the above theorem, that given a query learning problem $(\B,\cP,\vz,\vp)$, the expected number of queries required to identify an object is lower bounded by its entropy $H(\cP)$. Also, this lower bound can be achieved iff the reduction factors of all the queries in a query group at each internal node of the tree is equal to $0.5$. In fact, Theorem \ref{thm:object identification} is a special case of the above theorem where each query group has just one query. 

Given a query learning problem $(\B,\cP,\vz,\vp)$, the problem of finding a decision tree with minimum $\E[K]$ can be formulated as the following optimization problem
\begin{eqnarray}
\label{eq:optimization group queries}
& \underset{\T \in \cT(\B,\cP,\vz,\vp)}{\operatorname{\min}}  \sum_{a \in \cI} \p_a \pi_{\Theta_{a}} \left [1 - \sum_{q \in Q^{z_a}} p_{z_a}(q) H(\rho_{a}(q) ) \right ] &
\end{eqnarray}

Note that here the reduction factors $\rho_{a}(q), \forall q \in Q^{z_a}$ and the prior probability function $p_{z_a}(q)$ depends on the query group $z_a \in \{1,\cdots,n\}$ chosen at node `$a$' in the tree. The above optimization problem being a generalized version of the optimization problem in (\ref{eq:optimization object identification}) is NP-complete. A greedy top-down local optimization of the above objective function yields a suboptimal solution where we choose a query group that minimizes the term $C_a(j) := \left [1 - \sum_{q \in Q^{j}} p_{j}(q) H(\rho_{a}(q) ) \right ]$ at each internal node, starting from the root node. The algorithm as summarized in Algorithm \ref{algo_groupqueries} below is referred to as GQSA (Group Queries Splitting Algorithm) in the rest of this paper.

\restylealgo{boxed}
\begin{algorithm}
\dontprintsemicolon
\caption{Greedy decision tree algorithm for object identification with group queries\label{algo_groupqueries}}

\textbf{\underline{Group Queries Splitting Algorithm (GQSA)}} \;
\BlankLine
\textbf{Initialization :}  \emph{Let the leaf set consist of the root node} \;
\SetLine
\While{some leaf node `$a$' has $|\Theta_a| > 1$}{
\For{each query group with $\left |\overline{Q_a^j} \right | \geq 1$}{
Compute the prior probabilities of selecting queries within a group $p_j(q), \forall q \in Q^j$ at node `$a$'\;
Compute the reduction factors for all the queries in the query group $\{\rho_{a}(q)\}_{q \in Q^j}$\; 
Compute the cost $C_a(j)$ of using query group $j$ at node `$a$'\;
}
Choose a query group $j$ with the least cost $C_a(j)$ at node `$a$'\;
Form the left and the right child nodes for all queries with $p_j(q) > 0$ in the query group\;
}
\end{algorithm}

\section{Group identification under group queries}
\label{sec:group identification group queries}

For the sake of completion, we consider here the problem of identifying the group of an unknown object $\theta \in \Theta$ under group queries. The input is a binary matrix $\B$ denoting the relationship between $M$ objects and $N$ queries, where the objects are grouped into $m$ groups and the queries are grouped into $n$ groups. The task is to identify the group of an unknown object through as few queries from $Q$ as possible where, at each stage, the user is offered a query group from which a query is chosen.

As noted in Section \ref{sec:group identification}, a decision tree constructed for group identification can have multiple objects terminating in the same leaf node. Also, a decision tree constructed for group identification with a query group at each internal node has multiple leaves terminating in the same group. Hence a decision tree constructed in this section can have multiple objects terminating in the same leaf node and multiple leaves terminating in the same group. Also, we use most of the terminology defined in Sections \ref{sec:group identification} and \ref{sec:group queries} here.

We denote a query learning problem for group identification with query groups by $(\B,\cP,\vy,\vz,\vp)$ where $\vy = (y_1,\cdots,y_M)$ denotes the group labels on the objects, $\vz = (z_1,\cdots,z_N)$ denotes the group labels on the queries and $\vp = (p_1(q),\cdots,p_n(q))$ denotes the \emph{a priori} probability functions of selecting queries within query groups. Given a query learning problem $(\B,\cP,\vy,\vz,\vp)$, let $\cT(\B,\cP,\vy,\vz,\vp)$ denote the set of decision trees that can uniquely identify the groups of all objects in the set $\Theta$ with query groups at each internal node. For any decision tree $\T \in \cT(\B,\cP,\vy,\vz,\vp)$, let $\{\rho_{a}(q)\}_{q \in Q^{z_a}}$ denote the reduction factor set and let $\{\{\rho_{a}^i(q)\}_{i=1}^m \}_{q \in Q^{z_a}}$ denote the group reduction factor sets at each internal node $a \in \cI$ in the tree, where $z_a \in \{1,\cdots,n\}$ denotes the query group selected at that node. 

Also, for a tree with $L$ leaves, let $\cL^i \subset \cL = \{1,\cdots,L\}$ denote the set of leaves terminating in object group $i$ and let $d_j,\p_j$ denote the depth of leaf node $j \in \cL$ and the probability of reaching that node given $\theta \in \Theta_j$, respectively. Then, the expected number of queries required to identify the group of an unknown object using the given tree is equal to
\begin{eqnarray*}
\E[K] &=& \sum_{i=1}^m \prob(\theta \in \Theta^i) \E[K|\theta \in \Theta^i] \\
& = & \sum_{i=1}^m \pi_{\Theta^i} \left [ \sum_{j \in \cL^i} \frac{\pi_{\Theta_j}}{\pi_{\Theta^i}} \p_j d_j \right ]
\end{eqnarray*} 

\begin{thm}
\label{thm:group identification group queries}
The expected number of queries required to identify the group of an unknown object using a tree $\T \in \cT(\B,\cP,\vy,\vz,\vp)$ is given by
\begin{equation}
\label{eq:group identification group queries}
\E[K] = H(\cP_{\vy}) + \sum_{a \in \cI} \p_a \pi_{\Theta_{a}} \left \{ 1 - \sum_{q \in Q^{z_a} } p_{z_a}(q)\left [ H(\rho_{a}(q) ) - \sum_{i=1}^m \frac{\pi_{\Theta_{a}^i}}{\pi_{\Theta_a}} H(\rho_{a}^i(q)) \right ] \right \} 
\end{equation}
where $\cP_{\vy}$ denotes the probability distribution of the object groups induced by the labels $\vy$, i.e. $\cP_{\vy} = (\pi_{\Theta^1},\cdots,\pi_{\Theta^m})$
\end{thm}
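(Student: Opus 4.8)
The goal is to prove the master formula (\ref{eq:group identification group queries}), from which Theorems \ref{thm:object identification}, \ref{thm:group identification}, and \ref{thm:group queries} all follow as special cases. The natural strategy is a telescoping/induction argument over the internal nodes of the tree, exactly paralleling the structure that must already underlie the (deferred) proofs of the earlier theorems. I would start from the exact expression $\E[K] = \sum_{i=1}^m \pi_{\Theta^i} \sum_{j \in \cL^i} \frac{\pi_{\Theta_j}}{\pi_{\Theta^i}} \p_j d_j$ given just before the statement, and rewrite it by reorganizing the sum over leaves. The key bookkeeping identity is that for a leaf $j$ at depth $d_j$, the quantity $\p_j d_j$ counts one unit for each internal node on the root-to-$j$ path; swapping the order of summation, $\sum_{j} (\text{leaf weight}_j)\, \p_j d_j = \sum_{a \in \cI} (\text{aggregate weight reaching } a)$, where the weight reaching an internal node $a$ is $\p_a \pi_{\Theta_a}$ once one accounts for the user-choice probabilities $\p_a$ along the path (this is the group-query analogue of the identity $\E[K] = \sum_{a\in\cI}\pi_{\Theta_a}$ used in Theorem \ref{thm:object identification}). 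So the first reduction is $\E[K] = \sum_{a \in \cI} \p_a \pi_{\Theta_a}$; I must be careful that "leaves terminating in group $i$" and the renormalization $\pi_{\Theta_j}/\pi_{\Theta^i}$ interact correctly with the probabilities $\p_j$ on the dashed edges, but this is a weighted path-length rearrangement and should go through cleanly.

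**From the node sum to the entropy form.** Having $\E[K] = \sum_{a\in\cI}\p_a\pi_{\Theta_a}$, I then need to produce the $H(\cP_{\vy})$ term and the per-node correction. The cleanest route is a "chain rule for entropy along the tree" argument: assign to each internal node $a$ with query group $z_a$ and to each possible chosen query $q \in Q^{z_a}$ the local decomposition of the group-labeled probability mass at $a$ into its two children. Concretely, consider the random pair (object, path) and expand $H$ of the induced leaf-group distribution by conditioning step-by-step down the tree. At a node $a$, conditioned on reaching $a$, the user picks $q$ with probability $p_{z_a}(q)$ and the split sends mass to $l(a),r(a)$ in proportions governed by $\rho_a(q)$; simultaneously each group $i$'s mass splits according to $\rho_a^i(q)$. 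The standard identity $H(p\text{-mixture}) = \sum p\, H(\text{component}) + H(p)$ together with $H(\text{joint}) = H(\text{which child}) + H(\text{group} \mid \text{child})$ will, after summing over all nodes and all queries, telescope: the "group given you've reached this leaf" entropies at the leaves collapse to $0$ (since in $\cT(\B,\cP,\vy,\vz,\vp)$ every leaf is group-pure), the entropy at the root is $H(\cP_{\vy})$, and the residual terms at each internal node are exactly $\p_a\pi_{\Theta_a}\{\,1 - \sum_q p_{z_a}(q)[H(\rho_a(q)) - \sum_i \frac{\pi_{\Theta_a^i}}{\pi_{\Theta_a}}H(\rho_a^i(q))]\,\}$. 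Matching this residual against $\E[K] = \sum_a \p_a\pi_{\Theta_a}$ — i.e. using that the "$1$" in the bracket accounts for the unit query cost at node $a$ while the $H(\rho_a(q))$ and $H(\rho_a^i(q))$ terms are precisely the local entropy changes — yields (\ref{eq:group identification group queries}).

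**Main obstacle and sanity checks.** The delicate part is the telescoping: I must define precisely the right "potential" at each node — something like $\Phi_a = \p_a \pi_{\Theta_a}\, H\big(\{\pi_{\Theta_a^i}/\pi_{\Theta_a}\}_{i}\big)$ scaled appropriately — and verify that $\Phi_a - \sum_{\text{children } c} \Phi_c$ equals the claimed per-node term minus the per-node contribution to $\sum_a \p_a\pi_{\Theta_a}$, uniformly over nodes, including how the user-choice probabilities $p_{z_a}(q)$ enter multiplicatively at internal nodes but do not appear at leaves. Getting the weights $\p_a$ to propagate correctly (each dashed edge multiplies the reaching-probability, and $\sum_q p_{z_a}(q) = 1$ keeps the object-mass totals consistent) is where an off-by-a-factor error would hide. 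I would guard against this by checking the three degenerations: (i) $n$ singleton query groups and $m$ singleton object groups recovers Theorem \ref{thm:object identification} (all $\rho_a^i \equiv 1$, all $\p_a \equiv 1$, $p_{z_a}(q)\equiv 1$); (ii) singleton query groups with general object groups recovers Theorem \ref{thm:group identification}; (iii) general query groups with singleton object groups recovers Theorem \ref{thm:group queries}. Each check fixes one family of factors, so passing all three essentially pins down the formula. Finally, I would note the "iff" consequences stated after the theorem follow immediately since $H(\rho_a(q)) \le 1$ with equality iff $\rho_a(q) = 1/2$, and $H(\rho_a^i(q)) \ge 0$ with equality iff $\rho_a^i(q) = 1$, so the bracketed correction is nonnegative and vanishes exactly under the stated balance/purity conditions.
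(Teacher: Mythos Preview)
Your proposal is correct and follows essentially the same approach as the paper: the paper's proof also defines the node-level group entropy $H_a = -\sum_i \frac{\pi_{\Theta_a^i}}{\pi_{\Theta_a}}\log\frac{\pi_{\Theta_a^i}}{\pi_{\Theta_a}}$ (your potential $\Phi_a$ up to the $\p_a\pi_{\Theta_a}$ weight), establishes the one-step decomposition of $\pi_{\Theta_a}\mu_a$ and $\pi_{\Theta_a}H_a$ into their children's contributions plus the local $[1 - H(\rho_a) + \sum_i \frac{\pi_{\Theta_a^i}}{\pi_{\Theta_a}}H(\rho_a^i)]$ correction (Lemma~\ref{lem:lemma group identification}), and then telescopes by induction on subtrees, averaging over $q\in Q^{z_a}$ with weights $p_{z_a}(q)$ at each internal node. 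The only organizational difference is that the paper bundles $\mu_a - H_a$ and inducts on that combined quantity, whereas you treat the depth sum $\E[K]=\sum_a \p_a\pi_{\Theta_a}$ and the entropy telescope separately before recombining; the underlying identities are identical.
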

\begin{proof}
The proof is given in Appendix $\mathrm{I}$.
\end{proof}

Note that Theorems \ref{thm:object identification}, \ref{thm:group identification} and \ref{thm:group queries} are special cases of the above theorem. This theorem states that, given a query learning problem $(\B,\cP,\vy,\vz,\vp)$, the expected number of queries required to identify the group of an object is lower bounded by the entropy of the probability distribution of the object groups $H(\cP_{\vy})$. It also follows from the above theorem that this lower bound can be achieved iff the reduction factors and the group reduction factors of all the queries in a query group at each internal node are equal to $0.5$ and $1$ respectively. 

The problem of finding a decision tree with minimum $\E[K]$ can be formulated as the following optimization problem

\begin{eqnarray}
\label{eq:optimization group identification group queries}
& \underset{\T \in \cT(\B,\cP,\vy,\vz,\vp)}{\operatorname{\min}}  \sum_{a \in \cI} \p_a \pi_{\Theta_{a}} \left \{ 1 - \sum_{q \in Q^{z_a} } p_{z_a}(q)\left [ H(\rho_{a}(q) ) - \sum_{i=1}^m \frac{\pi_{\Theta_{a}^i}}{\pi_{\Theta_a}} H(\rho_{a}^i(q)) \right ] \right \}   &
\end{eqnarray}

Note that here the reduction factors $\{\rho_{a}(q)\}_{q \in Q^{z_a}}$, the group reduction factors $\{\rho^i_{a}(q)\}_{q \in Q^{z_a}}$ for all $i = 1,\cdots,m$, and the prior probability function $p_{z_a}(q)$ depends on the query group $z_a \in \{1,\cdots,n\}$ chosen at node `$a$' in the tree. Once again, the above optimization problem being a generalized version of the optimization problem in (\ref{eq:optimization object identification}) is NP-complete. A greedy top-down optimization of the above objective function yields a suboptimal solution where we choose a query group that minimizes the term $C_a(j) := 1 - \sum_{q \in Q^{j}} p_{j}(q)\left [ H(\rho_{a}(q) ) - \sum_{i=1}^m \frac{\pi_{\Theta_{a}^i}}{\pi_{\Theta_a}} H(\rho_{a}^i(q)) \right ] $ at each internal node, starting from the root node. The algorithm as summarized in Algorithm \ref{algo_groupidentificationgroupqueries} below is referred to as GIGQSA (Group Identification under Group Queries Splitting Algorithm).

\begin{algorithm}
\dontprintsemicolon
\caption{Greedy decision tree algorithm for group identification under group queries\label{algo_groupidentificationgroupqueries}}

\textbf{\underline{Group Identification under Group Queries Splitting Algorithm (GIGQSA)}} 
\BlankLine
\textbf{Initialization :}  \emph{Let the leaf set consist of the root node} \;
\SetLine
\While{some leaf node `$a$' has more than one group of objects}{
\For{each query group with $\left |\overline{Q_a^j} \right | \geq 1$}{
Compute the prior probabilities of selecting queries within a group, $p_{j}(q), \forall q \in Q^j$ at node `$a$'\;
Compute the reduction factors for all the queries in the query group $\{\rho_{a}(q)\}_{q \in Q^j}$\; 
Compute the group reduction factors for all the queries in the query group $\{ \rho_{a}^i(q) \}_{ q \in Q^j }$, $\forall i = 1,\cdots,m$\;
Compute the cost $C_a(j)$ of using query group $j$ at node `$a$'\;
}
Choose a query group $j$ with the least cost $C_a(j)$ at node `$a$'\;
Form the left and the right child nodes for all queries with $p_j(q) > 0$ in the query group\;
}
\end{algorithm}

\section{Query learning with persistent noise}
\label{sec:QL with persistent noise}
We now consider the problem of identifying an unknown object $\theta \in \Theta$ through as few queries as possible in the presence of persistent query noise, and relate this problem to group identification. Query noise refers to errors in the query responses, i.e., the observed query response is different from the true response of the unknown object. For example, a victim of toxic chemical exposure may not report a symptom because of a delayed onset of that symptom. Unlike the noise model often assumed in the literature, where repeated querying results in independent realizations of the noise, persistent query noise is a more stringent noise model where repeated queries result in the same response. 

We refer to the bit string consisting of observed query responses as an input string. The input string can differ from the true bit string (corresponding to the row vector of the true object in matrix $\B$) due to persistent query noise. First, we describe the error model and then describe the application of group identification algorithms to uniquely identify the true object in the presence of persistent errors.

Consider the case where a fraction $\nu$ of the $N$ queries are prone to error. Also, assume that at any instance, not more than $\epsilon$ of these $\nu N$ queries are in error, where $\epsilon := \lfloor \frac{\delta - 1}{2} \rfloor$, $\delta$ being the minimum Hamming distance between any two rows of the matrix $\B$. The \emph{a priori} probability distribution of the number of errors is considered to be one of the following, 
\begin{eqnarray*}
\mbox{Probability model 1:} \ \ \ & & \prob (e \ \ \mbox{errors}) = \frac{{N\nu \choose e}}{\sum_{e'=0}^{\epsilon'} {N\nu \choose e'}}, \ \ \ 0 \leq e \leq \epsilon' \\  
\mbox{Probability model 2:} \ \ \ & & \prob (e \ \ \mbox{errors}) = \frac{{N\nu \choose e} p^e (1-p)^{N\nu - e}}{\sum_{e'=0}^{\epsilon'} {N\nu \choose e'}p^{e'} (1-p)^{N\nu - e'}}, \ \ \ 0 \leq e \leq \epsilon'
\end{eqnarray*}
where $\epsilon' := \min (\epsilon,N\nu)$. Note that probability model $2$ corresponds to a truncated binomial distribution where $0 \leq p \leq 0.5$ denotes the probability that a query prone to error is actually in error, while probability model $1$ is a special case of probability model $2$ when $p = 0.5$. Given this error model, the goal is to identify the true object through as few queries from $Q$ as possible.

This problem can be posed as a group identification problem as follows: Given a query learning problem $(\B,\cP)$ with $M$ objects and $N$ queries that is susceptible to $\epsilon$ errors, with a fraction $\nu$ of the $N$ queries prone to error, create $(\Bt,\cPt)$ with $M$ groups of objects and $N$ queries, where each object group in this new matrix consists of $\sum_{e = 0}^{\epsilon'} {N\nu \choose e}$ objects corresponding to all possible bit strings that differ from the original bit string in at most $\epsilon'$ positions corresponding to the $\nu N$ bits prone to error. Consider the toy example shown in Figure \ref{fig:toy example3} consisting of $2$ objects and $3$ queries with an $\epsilon = 1$ where queries $q_2$ and $q_3$ are prone to persistent query noise. Figure \ref{fig:dilated matrix example} demonstrates the construction of $\Bt$ for this toy example.

\begin{figure*}[!t]
\renewcommand{\arraystretch}{1.3}
\centerline{\subfigure[]{
\begin{tabular}{|c|c c c|c|}
\hline
& $q_1$ & $q_2$ & $q_3$ & $\cP$ \\
prone to error & $\times $ & $\checkmark$ & $\checkmark$ & \\
\hline
$\theta_1$ & 0 & 0 & 0 & $\frac{1}{4}$ \\
$\theta_2$ & 1 & 1 & 1 & $\frac{3}{4}$ \\
\hline
\end{tabular}
\label{fig:toy example3}}
\hfil
\subfigure[]{
\begin{tabular}{|c|c c c|c|c|}
\hline
& $q_1$ & $q_2$ & $q_3$ & $\cPt_1 (p = 0.5)$ & $\cPt_2 (p = 0.25)$  \\
\hline
 & 0 & 0 & 0 & $\frac{1}{12}$ & $\frac{3}{20}$  \\
 $\Theta^1$ & 0 & 1 & 0 & $\frac{1}{12}$ & $\frac{1}{20}$\\
 & 0 & 0 & 1 & $\frac{1}{12}$ & $\frac{1}{20}$\\
 \hline
 & 1 & 1 & 1 & $\frac{1}{4}$ & $\frac{9}{20}$ \\
 $\Theta^2$ & 1 & 0 & 1 & $\frac{1}{4}$ & $\frac{3}{20}$\\
 & 1 & 1 & 0 & $\frac{1}{4}$ & $\frac{3}{20}$\\
\hline
\end{tabular}
\label{fig:dilated matrix example}}}
\caption{\small \sl For the toy example shown in (a) consisting of $2$ objects and $3$ queries with an $\epsilon = 1$, where queries $q_2$ and $q_3$ are prone to persistent noise, (b) demonstrates the construction of matrix $\Bt$}
\end{figure*}

Each bit string in the object set $\Theta^i$ corresponds to one of the possible input strings when the true object is $\theta_i$ and at most $\epsilon'$ errors occur. Also, by definition of $\epsilon$, no two bit strings in the matrix $\Bt$ are the same. Given the \emph{a priori} probabilities of the objects in $\B$, the prior distribution of objects in $\Bt$ is generated as follows. For an object belonging to group $i$ in $\Bt$ whose bit string differs in $e \leq \epsilon'$ bit positions from the true bit string of $\theta_i$, the prior probability is given by
\begin{eqnarray*}
\mbox{Probability model 1:} \ \ \ & & \frac{1}{\sum_{e' = 0}^{\epsilon'} {N\nu \choose e'} } \ \pi_i \\
\mbox{Probability model 2:} \ \ \ & & \frac{p^e (1-p)^{N\nu - e}}{\sum_{e' = 0}^{\epsilon'} {N\nu \choose e'} p^{e'} (1-p)^{N\nu - e'}} \ \pi_i
\end{eqnarray*}
Figure \ref{fig:dilated matrix example} shows the prior probability distribution of the objects in $\Bt$ using probability model $1$ $(\cPt_1)$ and probability model $2$ with $p = 0.25$ $(\cPt_2)$ for the toy example shown in Figure \ref{fig:toy example3}.

Given a query learning problem $(\B,\cP)$ that is susceptible to $\epsilon$ errors, the problem of identifying an unknown object in the presence of at most $\epsilon$ persistent errors can be reduced to the problem of identifying the group of an unknown object in $(\Bt,\cPt)$, where $(\Bt,\cPt)$ is generated as described above. One possible concern with this approach could be any memory related issue in generating matrix $\Bt$ due to the combinatorial explosion in the number of objects in $\Bt$. Interestingly, the relevant quantities for query selection in both GISA and GBS (i.e., the reduction factors) can be efficiently computed without explicitly constructing the $\Bt$ matrix, described in detail in Appendix $\mathrm{II}$.

\section{Experiments}
We perform three sets of experiments, demonstrating our algorithms for group identification, object identification using query groups, and query learning with persistent noise. In each case, we compare the performances of the proposed algorithms to standard algorithms such as the splitting algorithm, using synthetic data as well as a real dataset, the WISER database. The WISER database is a toxic chemical database describing the binary relationship between $298$ toxic chemicals and $79$ acute symptoms. The symptoms are grouped into $10$ categories (e.g., neurological, cardio) as determined by NLM, and the chemicals are grouped into $16$ categories (e.g., pesticides, corrosive acids) as determined by a toxicologist and a Hazmat expert.  

\subsection{Group identification}
Here, we consider a query learning problem $(\B,\cP)$ where the objects are grouped into $m$ groups given by $\vy = (y_1,\cdots,y_M)$, $y_i \in \{1,\cdots,m\}$, with the task of identifying the group of an unknown object from the object set $\Theta$ through as few queries from $Q$ as possible. First, we consider random datasets generated using a random data model and compare the performances of GBS and GISA for group identification in these random datasets. Then, we compare the performance of the two algorithms in the WISER database. In both these experiments, we assume a uniform \emph{a priori} probability distribution on the objects.
\subsubsection{Random Datasets}
\label{sec:random networks group identification}
We consider random datasets of the same size as the WISER database, with $298$ objects and $79$ queries where the objects are grouped into $16$ classes with the same group sizes as that in the WISER database. We associate each query in a random dataset with two parameters, $\gamma_w \in [0.5,1]$ which reflects the correlation of the object responses \textit{within} a group, and $\gamma_b \in [0.5,1]$ which captures the correlation of the object responses \textit{between} groups. When $\gamma_w$ is close to $0.5$, each object within a group is equally likely to exhibit $0$ or $1$ as its response to the query, whereas, when $\gamma_w$ is close to $1$, most of the objects within a group are highly likely to exhibit the same response to the query. Similarly, when $\gamma_b$ is close to $0.5$, each group is equally likely to exhibit $0$ or $1$ as its response to the query, where a group response corresponds to the majority vote of the object responses within a group, while, as $\gamma_b$ tends to $1$, most of the groups are highly likely to exhibit the same response.    

Given a $(\gamma_w,\gamma_b)$ pair for a query in a random dataset, the object responses for that query are created as follows
\begin{enumerate}
\item
Generate a Bernoulli random variable, $x$
\item
For each group $i \in \{1,\cdots,m\}$, assign a binary label $b_i$, where $b_i = x$ with probability $\gamma_b$
\item
For each object in group $i$, assign $b_i$ as the object response with probability $\gamma_w$ 
\end{enumerate} 
Given the correlation parameters $(\gamma_w(q),\gamma_b(q)) \in [0.5,1]^2, \forall q \in Q$, a random dataset can be created by following the above procedure for each query. Conversely, we describe in Section \ref{sec:WISER group identification} on how to estimate these parameters for a given dataset. 

Figure \ref{fig:random experiment 1} compares the mean $\E[K]$ for GBS and GISA in $100$ randomly generated datasets (for each value of $d_1$ and $d_2$), where the random datasets are created such that the query parameters are uniformly distributed in the rectangular space governed by $d_1,d_2$ as shown in Figure \ref{fig:gamma parameter space}. This demonstrates the improved performance of GISA over GBS in group identification. Especially, note that $\E[K]$ tends close to entropy $H(\cP_{\vy})$ using GISA as $d_2$ increases.  

\begin{figure}[!t]
  \centering
       \includegraphics[width = 0.8\textwidth]{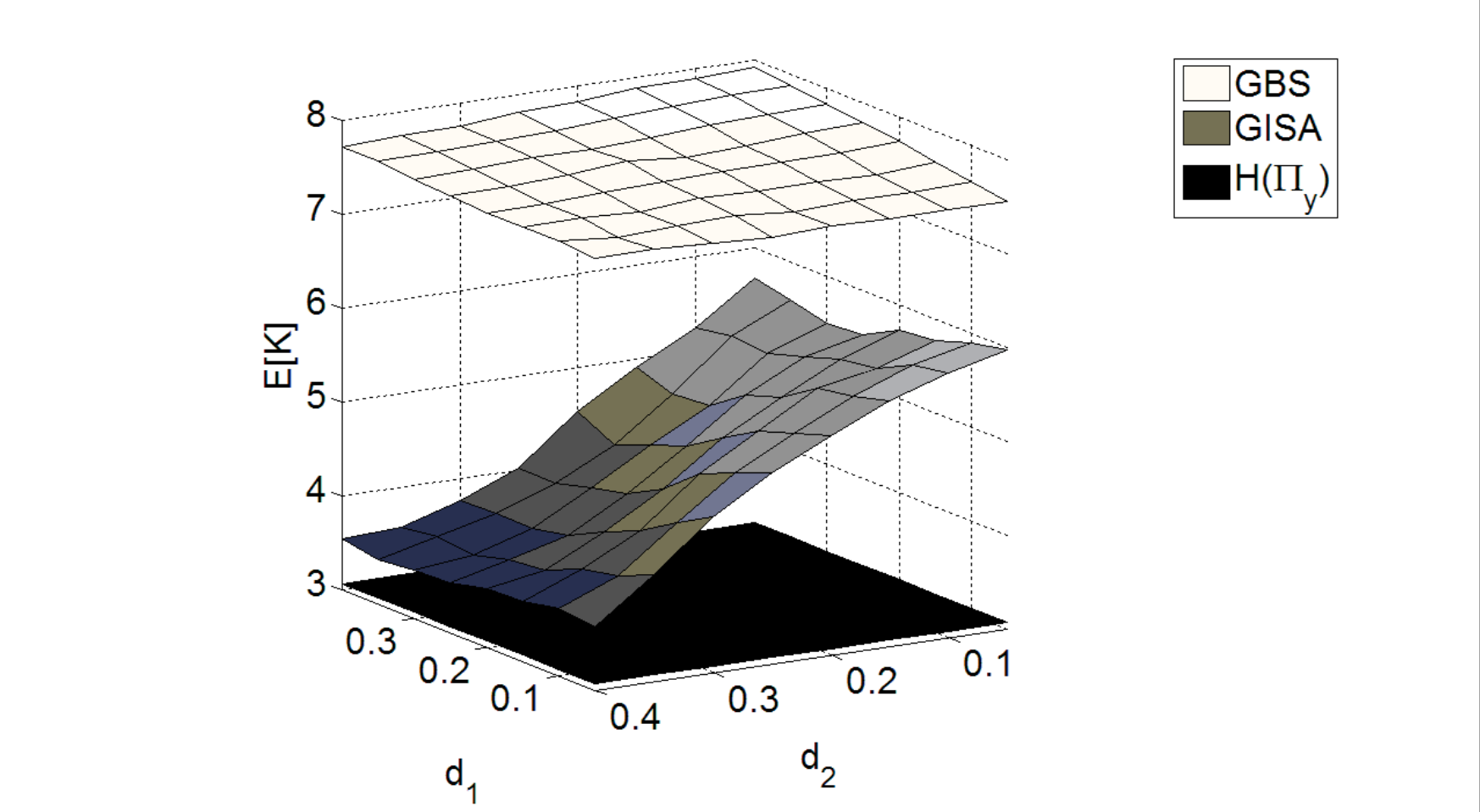}
 \caption{\small \sl Expected number of queries required to identify the group of an object using GBS and GISA on random datasets generated using the proposed random data model}
 \label{fig:random experiment 1}
\end{figure}  

This is due to the increment in the number of queries in the fourth quadrant of the parameter space as $d_2$ increases. Specifically, as the correlation parameters $\gamma_w,\gamma_b$ tends to $1$ and $0.5$ respectively, choosing that query eliminates approximately half the groups with each group being either completely eliminated or completely included, i.e. the group reduction factors tend to $1$ for these queries. Such queries are preferable in group identification and GISA is specifically designed to search for these queries leading to its strikingly improved performance over GBS as $d_2$ increases.

\begin{figure}[!t]
\begin{minipage}[b]{0.5\linewidth}
\begin{center}
 \includegraphics[width = 1.1\textwidth]{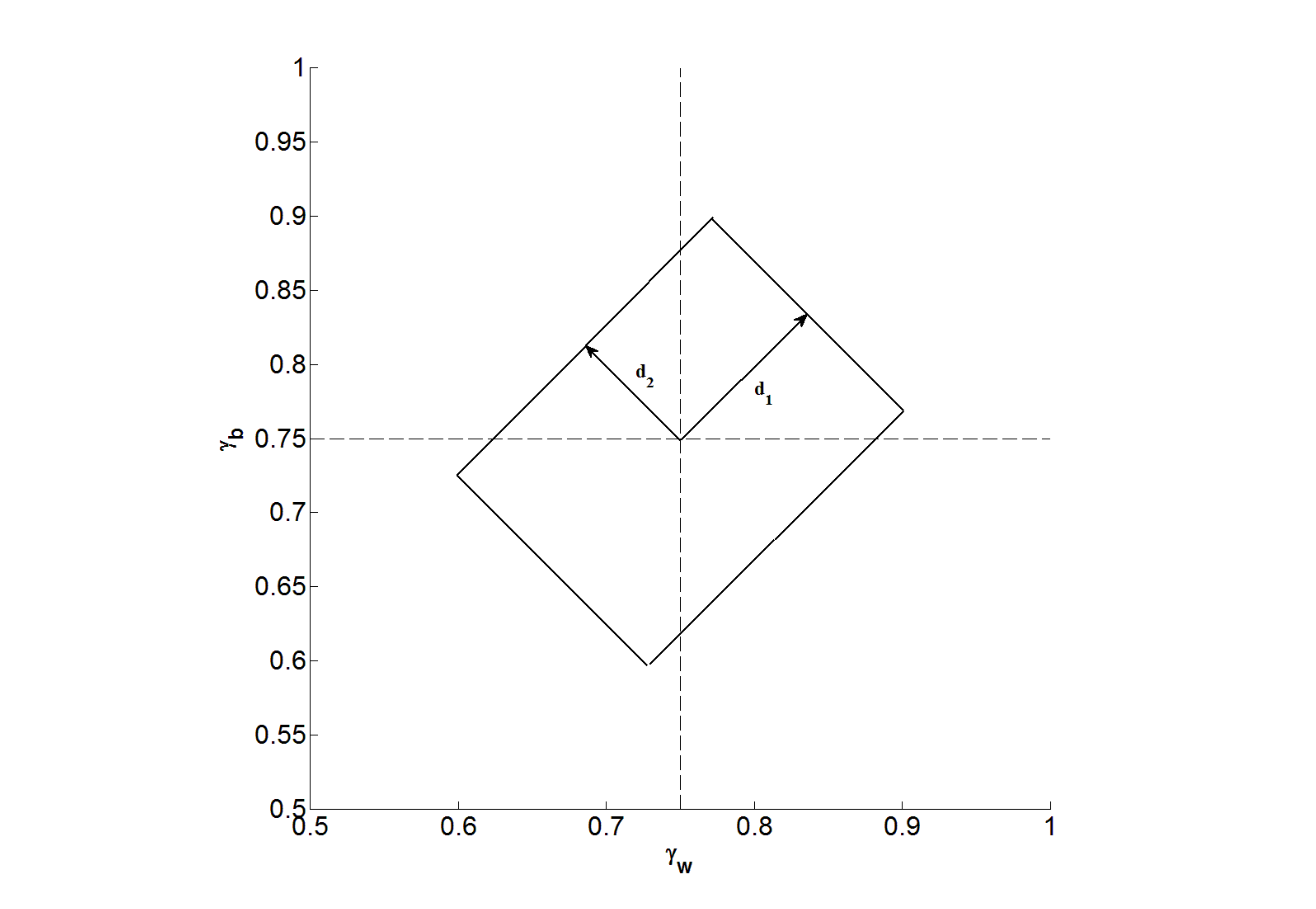}
\end{center}
\caption{\small \sl Random data model - The query parameters $(\gamma_w(q),\gamma_b(q))$ are restricted to lie in the rectangular space}
\label{fig:gamma parameter space}
\end{minipage}
\hspace{0.35cm}
\begin{minipage}[b]{0.5\linewidth}
\begin{center}
  \includegraphics[width = 1.1\textwidth]{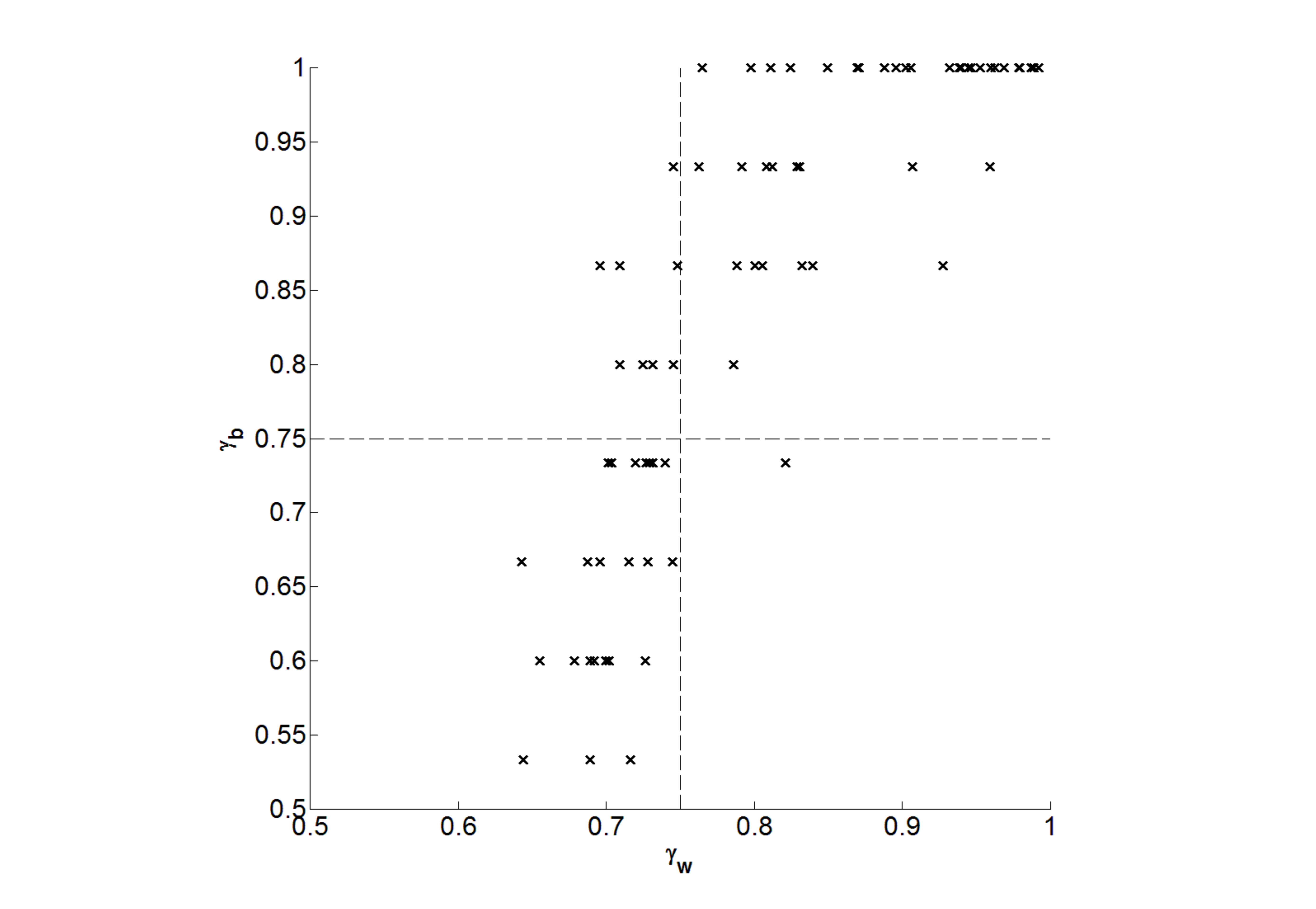}
\end{center}
\caption{\small \sl Scatter plot of the query parameters in the WISER database}
\label{fig:WISER gamma parameters}
\end{minipage}
\end{figure}

\subsubsection{WISER Database}
\label{sec:WISER group identification}

Table \ref{table:WISER group identification} compares the expected number of queries required to identify the group of an unknown object in the WISER database using GISA, GBS and random search, where the group entropy in the WISER database is given by $H(\cP_{\vy}) = 3.068$. The table reports the $95\%$ symmetric confidence intervals based on random trails, where the randomness in GISA and GBS is due to the presence of multiple best splits at each internal node.

However, the improvement of GISA over GBS on WISER is less than was observed for many of the random datasets discussed above. To understand this, we developed a method to estimate the correlation parameters of the queries for a given dataset $\B$. For each query in the dataset, the correlation parameters can be estimated as follows
\begin{enumerate}
\item
For every group $i \in \{1,\cdots,m\}$, let $b_i$ denote the group response given by the majority vote of object responses in the group and let $\widehat{\gamma_w^i}$ denote the fraction of objects in the group with similar response as $b_i$
\item
Denote by a binary variable $x$, the majority vote of the group responses $\vb = [b_1,\cdots,b_m]$
\item
Then, $\widehat{\gamma_b}$ is given by the fraction of groups with similar response as $x$, and $\widehat{\gamma_w} = \frac{1}{m} \sum_i \widehat{\gamma_w^i}$
\end{enumerate}
Now, we use the above procedure to estimate the query parameters for all queries in the WISER database, shown in Figure \ref{fig:WISER gamma parameters}. Note from this figure that there is just one query in the fourth quadrant of the parameter space and there are no queries with $\gamma_w$ close to $1$ and $\gamma_b$ close to $0.5$. In words, chemicals in the same group tend to behave differently and chemicals in different groups tend to exhibit similar response to the symptoms. This is a manifestation of the non-specificity of the symptoms in the WISER database as reported by Bhavnani et. al. \cite{suresh}.

\begin{table}[ht]
\begin{minipage}[b]{0.5 \linewidth}
\label{table:WISER group identification}
\centering
\begin{tabular}{|c|c|}
\hline
Algorithm & $\E[K]$ \\
\hline
GISA & 7.792 $\pm$ 0.001 \\
GBS & 7.948 $\pm$ 0.003 \\
Random Search & 16.328 $\pm$ 0.177 \\
\hline
\end{tabular}
\caption{\small \sl Expected number of queries required to identify the group of an object in WISER database}
\end{minipage}
\hspace{0.5cm}
\begin{minipage}[b]{0.5 \linewidth}
\label{table:WISER group queries}
\centering
\begin{tabular}{|c | c |}
\hline
 Algorithm & $\E[K]$ \\
\hline
GBS & 8.283 $\pm$ 0.000  \\
GQSA & 11.360 $\pm$ 0.096 \\
$\min_i \min_{q \in Q^{i}} p_i(q) \rho_a(q)$ & 13.401 $\pm$ 0.116 \\
$\min_i \max_{q \in Q^i} p_i(q) \rho_a(q)$ & 18.697 $\pm$ 0.357 \\
Random Search & 20.251 $\pm$ 0.318 \\
\hline
\end{tabular}
\caption{\small \sl Expected number of queries required to identify an object under group queries in WISER database}
\end{minipage}
\end{table}

\subsection{Object identification under query classes}
In this section, we consider a query learning problem $(\B,\cP)$ where the queries are \emph{a priori} grouped into $n$ groups given by $\vz = (z_1,\cdots,z_N), z_i \in \{1,\cdots,n\}$, with the task of identifying an unknown object from the set $\Theta$ through as few queries from $Q$ as possible, where the user is presented with a query group at each stage to choose from. Note that this approach is midway between a complete active search strategy and a complete passive search strategy. Hence, we primarily compare the performance of GQSA to a completely active search strategy such as GBS and a completely passive search strategy like random search where the user randomly chooses the queries from the set $Q$ to answer. In addition, we also compare GQSA to other possible heuristics where we choose a query group $i$ that minimizes $\min_{q \in Q^{i}} p_{i}(q) \rho_a(q)$ or $\max_{q \in Q^{i}} p_{i}(q) \rho_a(q)$ at each internal node `$a$'.

First, we compare the performances of these algorithms on random datasets generated using a random data model. Then, we compare them in the WISER database. In both these experiments, we assume uniform \emph{a priori} probability distribution on the objects as well as on queries within a group. The latter probability distribution corresponds to the probability of a user selecting a particular query $q$ from a query group, $p_i(q), \forall i=1,\cdots,n$.

\subsubsection{Random Datasets}
\label{sec:random networks group queries}
Here, we consider random datasets of the same size as the WISER database, with $298$ objects and $79$ queries where the queries are grouped into $10$ groups with the same group sizes as that in the WISER database. We associate a random dataset with a parameter $\gamma_{max} \in [0.5,1]$, where $\gamma_{max}$ corresponds to the maximum permissible value of $\gamma_b$ for a query in the random dataset. Given a $\gamma_{max}$, a random dataset is created as follows
\begin{enumerate}
\item
For each query group, generate a $\gamma_b \in [0.5,\gamma_{max}]$
\item
For each query in the query group, generate a Bernoulli random variable $x$ and give each object the same query label as $x$ with probability $\gamma_b$
\end{enumerate}

Figure \ref{fig:random experiment group queries} compares the mean $\E[K]$ for the respective algorithms in $100$ randomly generated datasets, for each value of $\gamma_{max}$. The $\min \min$ corresponds to the heuristic where we minimize $\min_{q \in Q^{i}} p_{i}(q) \rho_a(q)$ at each internal node and the $\min \max$ corresponds to the heuristic where we minimize $\max_{q \in Q^{i}} p_{i}(q) \rho_a(q)$. Note from the figure that in spite of not being a completely active search strategy, the performance of GQSA is comparable to that of GBS and better than the other algorithms.

\begin{figure}[!t]
\begin{minipage}[b]{0.5\linewidth}
\begin{center}
 \includegraphics[width = 1.1\textwidth]{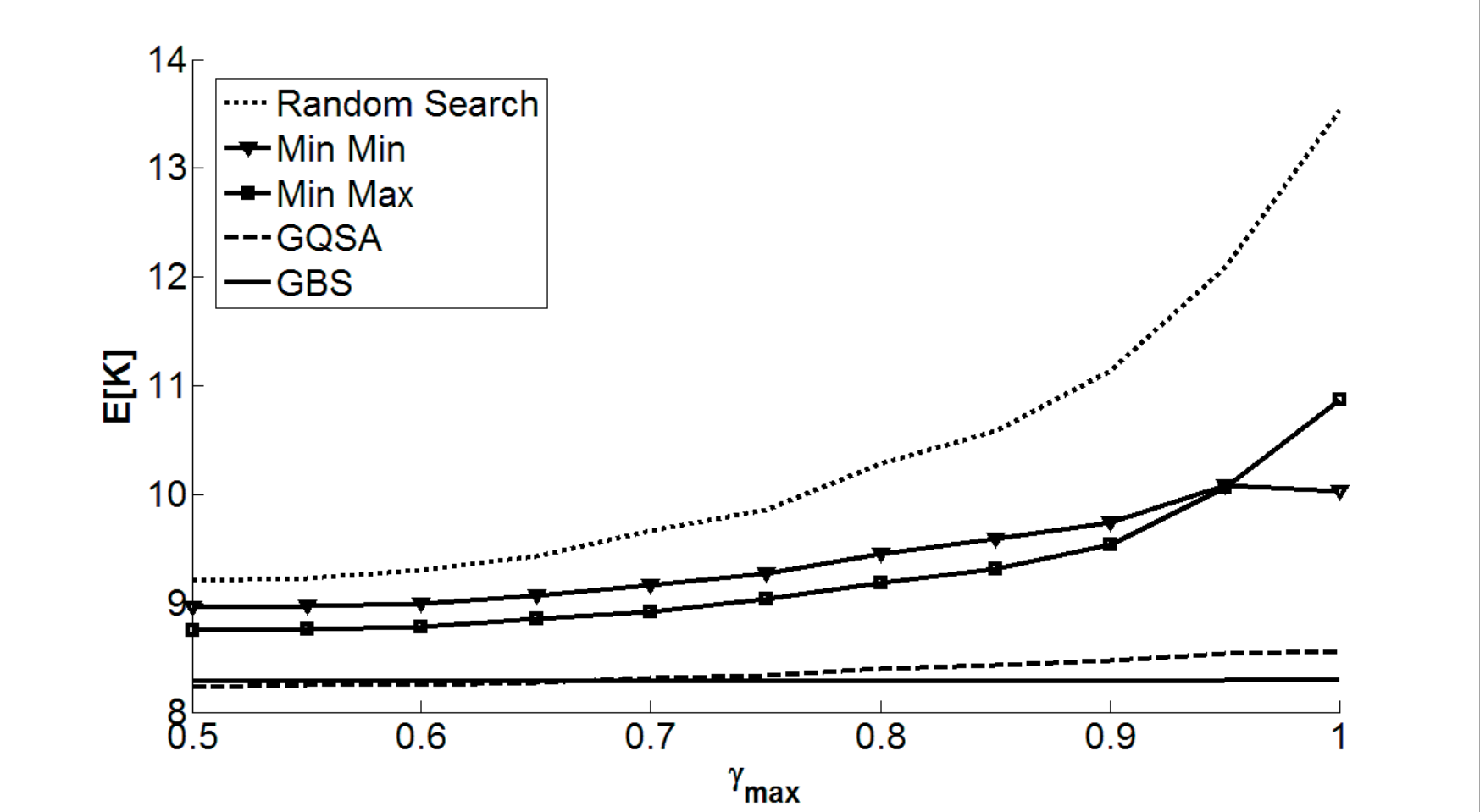}
\end{center}
\caption{\small \sl Expected number of queries required by different algorithms for object identification under group queries in random datasets}
\label{fig:random experiment group queries}
\end{minipage}
\hspace{0.35cm}
\begin{minipage}[b]{0.5\linewidth}
\begin{center}
  \includegraphics[width = 1.1\textwidth]{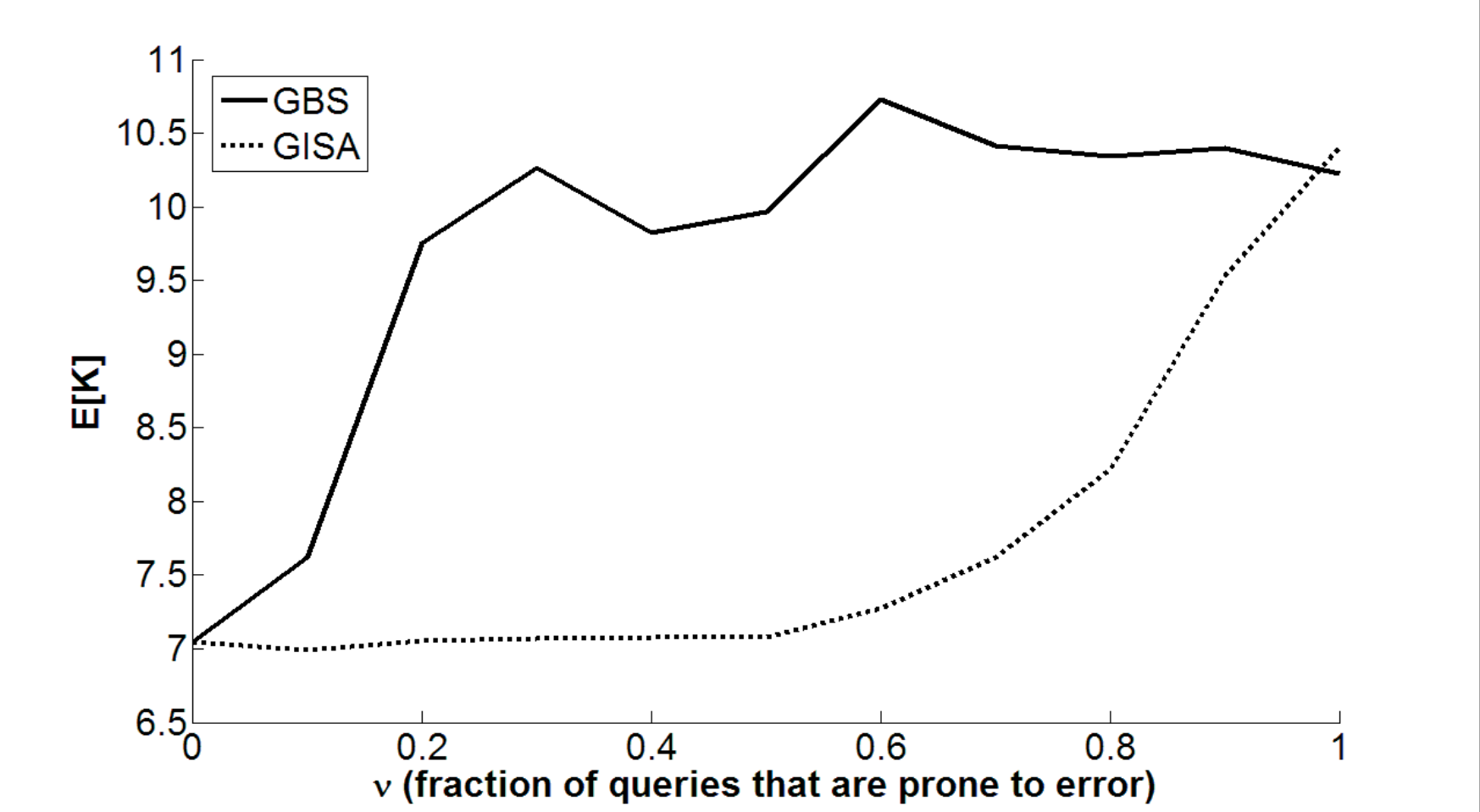}
\end{center}
\caption{\small \sl Comparison between the performance of GBS and GISA in identifying the true object in the presence of restricted persistent noise under probability model $1$}
\label{fig:WISER noise model1}
\end{minipage}
\end{figure}

\subsubsection{WISER Database}
\label{sec:WISER group queries}
Table \ref{table:WISER group queries} compares the expected number of queries required to identify an unknown object under group queries in the WISER database using the respective algorithms, where the entropy of the objects in the WISER database is given by $H(\cP) = 8.219$. The table reports the $95\%$ symmetric confidence intervals based on random trials, where the randomness in GBS is due to the presence of multiple best splits at each internal node. 

Once again, it is not surprising that GBS outperforms GQSA as GBS is fully active, i.e, it always chooses the best split, whereas GQSA does not always pick the best split, since a human is involved. Yet, the performance of GQSA is not much worse than that of GBS. Infact, if we were to fully model the time-delay associated with answering a query, then GQSA might have a smaller ``time to identification,'' because presumably it would take less time to answer the queries on average.

\subsection{Query learning with persistent noise}
In Section \ref{sec:QL with persistent noise}, we showed that identifying an unknown object in the presence of persistent errors can be reduced to a group identification problem. Hence, any group identification algorithm can be adopted to solve this problem. Here, we compare the performance of GBS and GISA in identifying the unknown object in the presence of persistent errors. 

Note from Section \ref{sec:QL with persistent noise} that the generation of matrix $\Bt$ requires the knowledge of the queries from the set $Q$ that are prone to error. We assume this knowledge in all our experiments in this section. Below, we show the procedure adopted to simulate the error model,
\begin{enumerate}
\item
Select the fraction $\nu$ of the $N$ queries that are prone to error
\item
Generate $e \in \{0,\cdots,\epsilon'\}$ according to the selected probability model
\item
Choose $e$ queries from the above $N \nu$ set of queries
\item
Flip the object responses of these $e$ queries in the true object
\end{enumerate}

\begin{figure}[!t]
\begin{minipage}[b]{0.5\linewidth}
\begin{center}
 \includegraphics[width = 1.1\textwidth]{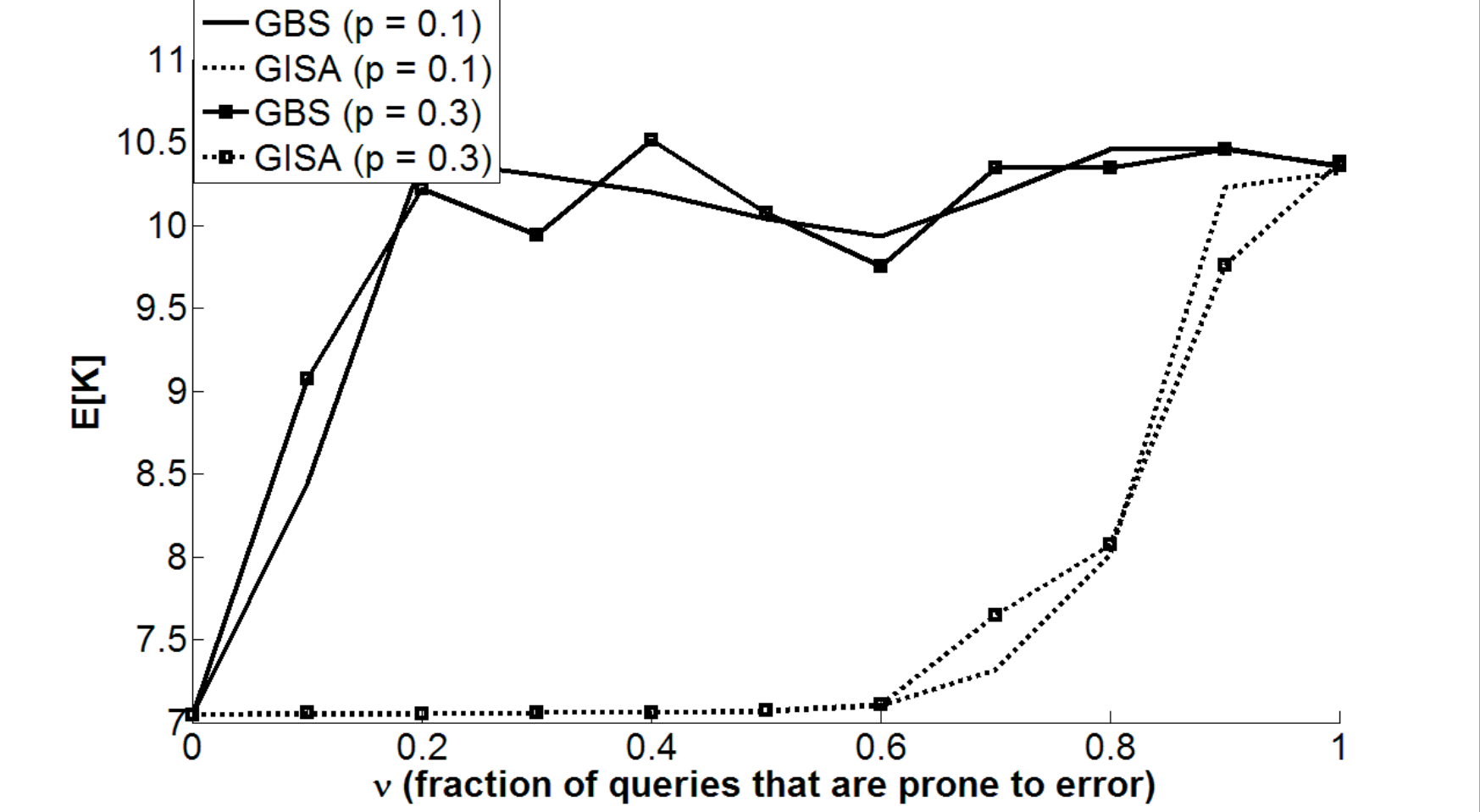}
\end{center}
\caption{\small \sl Comparison between the performance of GBS and GISA in identifying the true object in the presence of restricted persistent noise under probability model $2$}
\label{fig:WISER noise model2a}
\end{minipage}
\hspace{0.35cm}
\begin{minipage}[b]{0.5\linewidth}
\begin{center}
  \includegraphics[width = 1.1\textwidth]{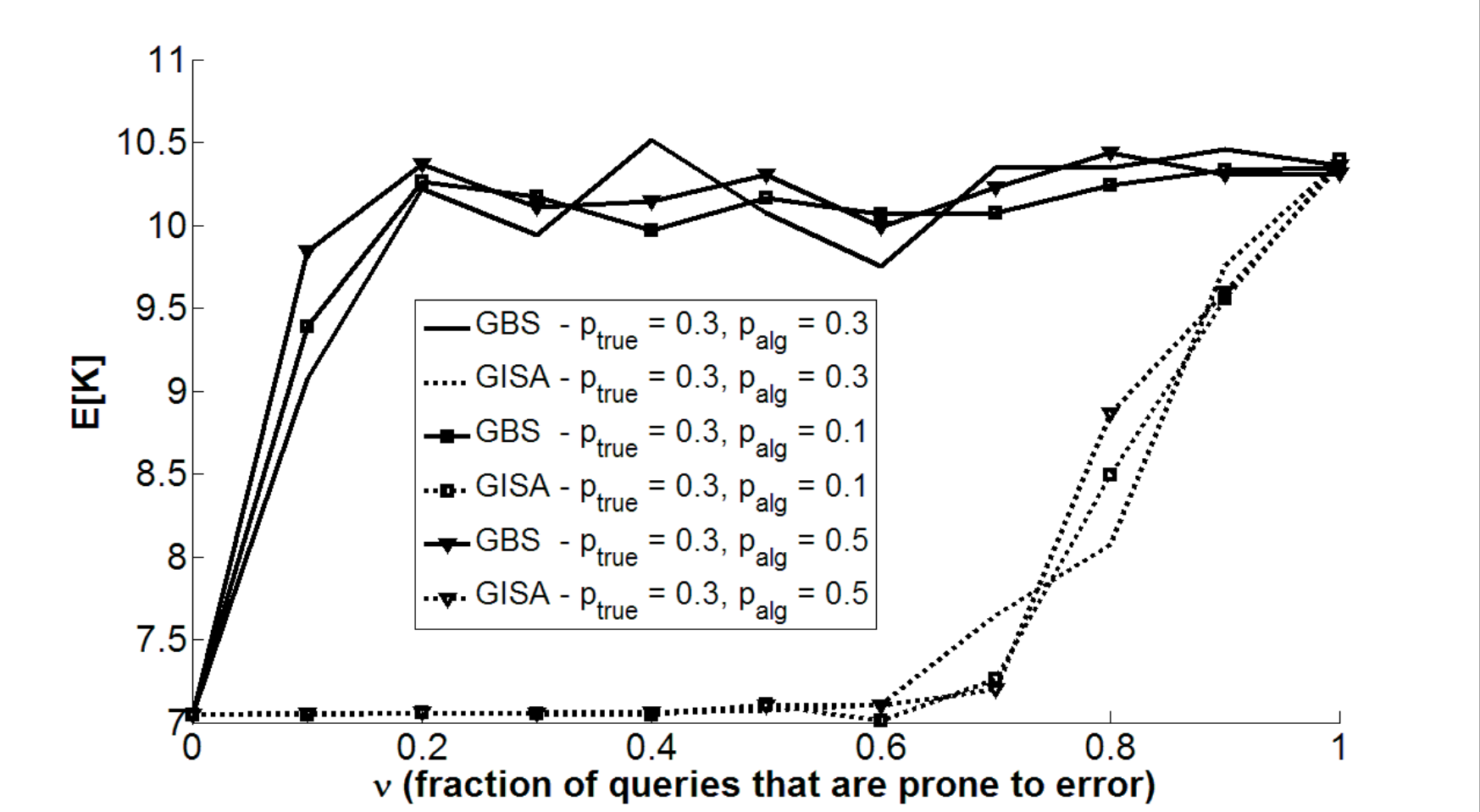}
\end{center}
\caption{\small \sl Comparison between the performance of GBS and GISA in probability model $2$ in the presence of discrepancies between the true value of $p$, $p_{true}$ and the value used in the algorithm $p_{alg}$}
\label{fig:WISER noise model2b}
\end{minipage}
\end{figure}

We compare the performance of GBS and GISA in a subset of the WISER database consisting of $131$ toxic chemicals and $79$ symptom queries with $\epsilon = 2$. Figure \ref{fig:WISER noise model1} shows the expected number of queries required by GBS and GISA to identify the true object in the presence of a maximum of $\epsilon$ persistent errors for different values of $\nu$, using probability model 1. Figure \ref{fig:WISER noise model2a} shows the same for different values of $p$ using probability model $2$. Note that except for the extreme cases where $\nu = 0$ and $\nu = 1$, GISA has great improvement over GBS. When $\nu = 0,1$, GBS and GISA reduce to the same algorithm.

Also, note that in probability model $2$, the algorithms requires the knowledge of $p$ as $\cPt$ depends on $p$. Though this probability can be estimated with the help of external knowledge sources beyond the database such as domain experts, user surveys or by analyzing past query logs, the estimated value of $p$ can vary slightly from its true value. Hence, we tested the sensitivity of these two algorithms to error in the value of $p$ and noted that there is not much change in their performance to discrepancies in the value of $p$ as shown in Figure \ref{fig:WISER noise model2b}.

\section{Conclusions and Future work}
In this paper, we developed algorithms that broaden existing methods for query learning to incorporate factors that are specific to a given task and environment. These algorithms are greedy algorithms derived in a common, principled framework based on a generalization of Shannon-Fano coding to group-based query learning. While our running example has been toxic chemical identification, the methods presented are applicable to a much broader class of applications, such as other forms of emergency response, fault diagnosis, network failure diagnosis or Internet based data search.
 
In a series of experiments on synthetic data and a toxic chemical database, we demonstrated the effectiveness of our algorithms relative to the standard splitting algorithm, also known as generalized binary search (GBS), which is the most commonly studied algorithm for query learning.  In some settings, our algorithms outperform GBS by drastic amounts. Furthermore, in the case of group identification, we have described a simple visualization (see Figure \ref{fig:WISER gamma parameters}), based on the underlying data matrix, that explains how much can be gained from GISA, our group identification algorithm. That is, it offers a picture of how much GISA will improve upon GBS without running either algorithm.

While this work is a step towards making query learning algorithms better suited to real-world identification tasks, there are many other issues that deserve to be examined in future work. These include challenges such as multiple objects present, probabilities of query response or query noise, or user confidence. In query learning with persistent noise, our approach can only recover from a restricted number of query errors, depending on the minimum Hamming distance between objects. While this assumption is required if we desire unique identification, it would be interested to loosen this assumption by pursuing a slightly less ambitious goal. Additionally, instead of minimizing the expected number of queries required for object/group identification, it would be valuable to develop a similar framework that minimizes the number of queries in the worst case, thereby eliminating dependence on the prior probabilities. Finally, it seems plausible that performance results like those proved in \cite{dasgupta} might also be possible for group-based query learning.

\section*{Appendix $\mathrm{I}$ - Proofs}
\subsection{Proof of Theorem \ref{thm:impurity based}}
\label{sec:proof1}
Let $\T_a$ denote a subtree from any node `$a$' in the tree $\T$ and let $\cL_a$ denote the set of leaf nodes in this subtree. Then, let $\mu_a$ denote the expected depth of the leaf nodes in this subtree, given by
\begin{eqnarray*}
\mu_a & = & \sum_{j \in \cL_a} \frac{\pi_{\Theta_j}}{\pi_{\Theta_a}} d_j^a
\end{eqnarray*}
where $d_j^a$ corresponds to the depth of leaf node $j$ in the subtree $\T_a$, and let $H_a$ denote the entropy of the probability distribution of the classes at the root node of the subtree $\T_a$, i.e. 
\begin{eqnarray*}
H_a = - \sum_{i=1}^m \frac{\pi_{\Theta_a^i}}{\pi_{\Theta_a}} \log \frac{\pi_{\Theta_a^i}}{\pi_{\Theta_a}}
\end{eqnarray*}
Now, we show using induction that for any subtree $\T_a$ in the tree $\T$, the following relation holds
\begin{eqnarray*} 
&\pi_{\Theta_a} \mu_a - \pi_{\Theta_a} H_a = \sum_{s \in \cI_a} \pi_{\Theta_{s}} \left [1 - H(\rho_s) + \sum_{i=1}^m \frac{\pi_{\Theta_s^i}}{\pi_{\Theta_s}} H(\rho_s^i) \right ] - \sum_{s \in \cL_a} \pi_{\Theta_s}I(\Theta_s)&
\end{eqnarray*}
where $\cI_a, \cL_a$ denotes the set of internal nodes and the set of leaf nodes in the subtree $\T_a$ respectively. 

The relation holds trivially for any subtree rooted at a leaf node of the tree $\T$ with both the left hand side and the right hand side of the expression equal to $- \pi_{\Theta_a}I(\Theta_a)$ (Note from (\ref{eq:entropy measure}) that $I(\Theta_a) = H_a$). Now, assume the above relation holds for the subtrees rooted at the left and right child nodes of node `$a$'. Then, using Lemma \ref{lem:lemma group identification} we have
\begin{eqnarray*}
\pi_{\Theta_a} [\mu_a - H_a] & = & \pi_{\Theta_{l(a)}} [\mu_{l(a)} - H_{l(a)}] + \pi_{\Theta_{r(a)}} [\mu_{r(a)} - H_{r(a)}] + \pi_{\Theta_a} \left [1 - H(\rho_a) + \sum_{i=1}^m \frac{\pi_{\Theta_a^i}}{\pi_{\Theta_a}} H(\rho_a^i) \right ] \\
& = & \sum_{s \in \cI_{l(a)}} \pi_{\Theta_{s}} \left [1 - H(\rho_s) + \sum_{i=1}^m \frac{\pi_{\Theta_s^i}}{\pi_{\Theta_s}} H(\rho_s^i) \right ] - \sum_{s \in \cL_{l(a)}} \pi_{\Theta_s}I(\Theta_s) \\
& & + \sum_{s \in \cI_{r(a)}} \pi_{\Theta_{s}} \left [1 - H(\rho_s) + \sum_{i=1}^m \frac{\pi_{\Theta_s^i}}{\pi_{\Theta_s}} H(\rho_s^i) \right ] - \sum_{s \in \cL_{r(a)}} \pi_{\Theta_s}I(\Theta_s) \\
& & +  \ \ \pi_{\Theta_a} \left [1 - H(\rho_a) + \sum_{i=1}^m \frac{\pi_{\Theta_a^i}}{\pi_{\Theta_a}} H(\rho_a^i) \right ] \\
& = & \sum_{s \in \cI_a} \pi_{\Theta_{s}} \left [1 - H(\rho_s) + \sum_{i=1}^m \frac{\pi_{\Theta_s^i}}{\pi_{\Theta_s}} H(\rho_s^i) \right ] - \sum_{s \in \cL_{a}} \pi_{\Theta_s}I(\Theta_s) 
\end{eqnarray*} 
thereby completing the induction. Finally, the result follows by applying the relation to the tree $\T$ whose probability mass at the root node, $\pi_{\Theta_a} = 1$.

\begin{lemma}
\label{lem:lemma group identification}
\begin{eqnarray*}
\pi_{\Theta_{a}} [\mu_a - H_a] & = & \pi_{\Theta_{l(a)}} [\mu_{l(a)} - H_{l(a)}] + \pi_{\Theta_{r(a)}} [\mu_{r(a)} - H_{r(a)}] + \pi_{\Theta_a} \left [1 - H(\rho_a) + \sum_{i=1}^m \frac{\pi_{\Theta_a^i}}{\pi_{\Theta_a}} H(\rho_a^i) \right ]
\end{eqnarray*}
\end{lemma}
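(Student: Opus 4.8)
The plan is to derive the identity by establishing, across the split at node `$a$', two separate recursions --- one for the expected-depth term $\mu_a$ and one for the class-entropy term $H_a$ --- and then subtracting them. Throughout I use that the objects reaching `$a$' partition into $\Theta_{l(a)}$ and $\Theta_{r(a)}$, so $\pi_{\Theta_a} = \pi_{\Theta_{l(a)}} + \pi_{\Theta_{r(a)}}$, and likewise $\pi_{\Theta_a^i} = \pi_{\Theta_{l(a)}^i} + \pi_{\Theta_{r(a)}^i}$ for every group $i$.

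For the depth recursion, observe that each leaf $j \in \cL_a$ lies in exactly one of the subtrees $\T_{l(a)}$, $\T_{r(a)}$, and its depth measured from `$a$' exceeds its depth measured from the relevant child by one: $d_j^a = 1 + d_j^{l(a)}$ for $j \in \cL_{l(a)}$, and similarly on the right. Multiplying the definition $\mu_a = \sum_{j \in \cL_a}(\pi_{\Theta_j}/\pi_{\Theta_a})\,d_j^a$ by $\pi_{\Theta_a}$, splitting the sum over the two child subtrees, and using $\sum_{j \in \cL_{l(a)}}\pi_{\Theta_j} = \pi_{\Theta_{l(a)}}$ yields $\pi_{\Theta_a}\mu_a = \pi_{\Theta_{l(a)}}\mu_{l(a)} + \pi_{\Theta_{r(a)}}\mu_{r(a)} + \pi_{\Theta_a}$.

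The entropy recursion is the heart of the matter, and the cleanest route is information-theoretic. Draw $\theta$ from $\cP$ conditioned on $\theta \in \Theta_a$; let $G$ be its group label and $B \in \{0,1\}$ its response to the query at `$a$'. Then $H(G) = H_a$, $H(B) = H(\rho_a)$, $H(G \mid B) = (\pi_{\Theta_{l(a)}}/\pi_{\Theta_a})H_{l(a)} + (\pi_{\Theta_{r(a)}}/\pi_{\Theta_a})H_{r(a)}$, and $H(B \mid G) = \sum_{i=1}^m (\pi_{\Theta_a^i}/\pi_{\Theta_a})H(\rho_a^i)$. Applying the chain rule $H(G) + H(B \mid G) = H(G,B) = H(B) + H(G \mid B)$ and multiplying through by $\pi_{\Theta_a}$ gives
\[
\pi_{\Theta_a}H_a = \pi_{\Theta_{l(a)}}H_{l(a)} + \pi_{\Theta_{r(a)}}H_{r(a)} + \pi_{\Theta_a}H(\rho_a) - \sum_{i=1}^m \pi_{\Theta_a^i}H(\rho_a^i).
\]
(The same identity can be obtained by brute force --- expand each $H$ as $-\sum p\log_2 p$, write $\log_2(\pi_{\Theta_a^i}/\pi_{\Theta_a}) = \log_2\pi_{\Theta_a^i} - \log_2\pi_{\Theta_a}$, and collect terms --- but the chain rule makes the cancellations transparent.) Subtracting this identity from the depth recursion produces exactly the claimed expression for $\pi_{\Theta_a}[\mu_a - H_a]$.

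The one point requiring care is degenerate nodes: a child may inherit none of the mass of some group (then $\rho_a^i = 1$ and $H(\rho_a^i) = 0$), or a group may be absent at `$a$' altogether ($\pi_{\Theta_a^i} = 0$); all such cases are absorbed by the standing convention $\lim_{\pi \to 0}\pi\log_2\pi = 0$, so the conditional-entropy identities above hold verbatim, and I would note this once rather than carry caveats through every line. One should also record that $H(\rho_a) = H(B)$ and $H(\rho_a^i) = H(B \mid G = i)$ in spite of the $\max$ in Definitions~\ref{defn:rhor} and~\ref{defn:grhor}, since $H(\pi) = H(1-\pi)$.
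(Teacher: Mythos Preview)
Your argument is correct and follows the paper's overall architecture exactly: derive a recursion for $\pi_{\Theta_a}\mu_a$ across the split, derive a separate recursion for $\pi_{\Theta_a}H_a$, and subtract. Your depth recursion is line-for-line the paper's equation~\eqref{eq:mu decomposition group identification}. The only genuine difference is in how the entropy recursion~\eqref{eq:entropy decomposition group identification} is obtained: the paper does it by brute-force expansion --- writing $\pi_{\Theta_a^i} = \pi_{\Theta_{l(a)}^i} + \pi_{\Theta_{r(a)}^i}$, inserting and removing $\log\pi_{\Theta_{l(a)}}$, $\log\pi_{\Theta_{l(a)}^i}$, etc., and regrouping --- whereas you invoke the chain rule $H(G)+H(B\mid G)=H(B)+H(G\mid B)$ on the pair (group label, query response) conditioned on $\theta\in\Theta_a$. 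Both routes land on the identical intermediate identity; yours is shorter and makes the structure of the cancellation transparent (and you rightly flag the $H(\pi)=H(1-\pi)$ symmetry needed to match $H(B)$ with $H(\rho_a)$ despite the $\max$ in the definitions), while the paper's explicit expansion has the minor advantage of not requiring the reader to set up auxiliary random variables. Your handling of degenerate masses via the $0\log 0 = 0$ convention is also appropriate and matches the paper's standing convention.
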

\begin{proof}
We first note that $\pi_{\Theta_{a}}\mu_a$ for a subtree $\T_a$ can be decomposed as
\begin{eqnarray}
\label{eq:mu decomposition group identification}
\pi_{\Theta_{a}}\mu_a & = & \sum_{j \in \cL_a} \pi_{\Theta_j} d_j^a \nonumber \\
 &= & \sum_{j \in \cL_{l(a)}} \pi_{\Theta_j} d_j^a + \sum_{j \in \cL_{r(a)}} \pi_{\Theta_j}d_j^a \nonumber \\ 
& = & \sum_{j \in \cL_{l(a)}} \pi_{\Theta_j} (d_j^a - 1) + \sum_{j \in \cL_{r(a)}} \pi_{\Theta_j} (d_j^a-1) + \sum_{j \in \cL_a} \pi_{\Theta_j} \nonumber \\ 
& = & \pi_{\Theta_{l(a)}} \mu_{l(a)} + \pi_{\Theta_{r(a)}} \mu_{r(a)} + \pi_{\Theta_a}
\end{eqnarray}
Similarly, $\pi_{\Theta_a}H_a$ can be decomposed as
\begin{eqnarray}
\label{eq:entropy decomposition group identification}
\pi_{\Theta_a}H_a & = & \sum_{i=1}^m \pi_{\Theta_a^i} \log \frac{\pi_{\Theta_a}}{\pi_{\Theta_a^i}}  \nonumber \\
& = & \sum_{i=1}^m \pi_{\Theta_{l(a)}^i} \log \frac{\pi_{\Theta_a}}{\pi_{\Theta_a^i}} + \sum_{i=1}^m \pi_{\Theta_{r(a)}^i} \log \frac{\pi_{\Theta_a}}{\pi_{\Theta_a^i}} \nonumber \\
& = & \sum_{i=1}^m \pi_{\Theta_{l(a)}^i} \log \frac{\pi_{\Theta_{l(a)}}}{\pi_{\Theta_{l(a)}^i}} + \sum_{i=1}^m \pi_{\Theta_{l(a)}^i} \log \frac{\pi_{\Theta_{l(a)}^i}}{\pi_{\Theta_a^i}} + \sum_{i=1}^m \pi_{\Theta_{l(a)}^i} \log \frac{\pi_{\Theta_a}}{\pi_{\Theta_{l(a)}}} \nonumber \\
& & + \sum_{i=1}^m \pi_{\Theta_{r(a)}^i} \log \frac{\pi_{\Theta_{r(a)}}}{\pi_{\Theta_{r(a)}^i}} + \sum_{i=1}^m \pi_{\Theta_{r(a)}^i} \log \frac{\pi_{\Theta_{r(a)}^i}}{\pi_{\Theta_a^i}} + \sum_{i=1}^m \pi_{\Theta_{r(a)}^i} \log \frac{\pi_{\Theta_a}}{\pi_{\Theta_{r(a)}}} \nonumber \\
& = & \pi_{\Theta_{l(a)}} H_{l(a)} + \pi_{\Theta_{r(a)}} H_{r(a)} - \sum_{i=1}^m \left [ \pi_{\Theta_{l(a)}^i} \log \frac{\pi_{\Theta_{a}^i}}{\pi_{\Theta_{l(a)}^i}} + \pi_{\Theta_{r(a)}^i} \log \frac{\pi_{\Theta_{a}^i}}{\pi_{\Theta_{r(a)}^i}} \right ] \nonumber \\
 & & + \left [ \pi_{\Theta_{l(a)}} \log \frac{\pi_{\Theta_a}}{\pi_{\Theta_{l(a)}}} + \pi_{\Theta_{r(a)}} \log \frac{\pi_{\Theta_a}}{\pi_{\Theta_{r(a)}}} \right ] \nonumber \\
& = & \pi_{\Theta_{l(a)}} H_{l(a)} + \pi_{\Theta_{r(a)}} H_{r(a)} - \sum_{i=1}^m \pi_{\Theta_a^i} H(\rho_a^i) + \pi_{\Theta_a} H(\rho_a)
\end{eqnarray}
The result follows from (\ref{eq:mu decomposition group identification}) and (\ref{eq:entropy decomposition group identification}) above.
\end{proof}

\subsection{Proof of Theorem \ref{thm:connection to impurity based DT induction}}
\label{sec:proof2}
From relation \eqref{eq:entropy decomposition group identification} in Lemma \ref{lem:lemma group identification}, we have
\begin{align*}
H_a - \left [\frac{\pi_{\Theta_{l(a)}}}{\pi_{\Theta_a}}H_{l(a)} + \frac{\pi_{\Theta_{r(a)}}}{\pi_{\Theta_a}}H_{r(a)} \right ] & = - \left [ - H(\rho_a ) + \sum_{i=1}^m \frac{\pi_{\Theta_a^i}}{\pi_{\Theta_a}} H(\rho_a^i) \right ] 
\end{align*}
Thus, maximizing the impurity based objective function with entropy function as the impurity function is equivalent to minimizing the cost function $C_a := 1 - H(\rho_a ) + \sum_{i=1}^m \frac{\pi_{\Theta_a^i}}{\pi_{\Theta_a}} H(\rho_a^i)$

\subsection{Proof of Theorem \ref{thm:group identification group queries}}
\label{sec:proof3}
Let $\T_a$ denote a subtree from any node `$a$' in the tree $\T$ and let $\cL_a$ denote the set of leaf nodes in this subtree. Then, let $\mu_a$ denote the expected number of queries required to identify the group of an object terminating in a leaf node of this subtree, given by 
\begin{eqnarray*}
\mu_a & = & \sum_{j \in \cL_a} \frac{\pi_{\Theta_j}}{\pi_{\Theta_{a}}} \p_j^a d_j^a
\end{eqnarray*}
where $d_j^a,\p_j^a$ denotes the depth of leaf node $j$ in the subtree $T_a$ and the probability of reaching that leaf node given $\theta \in \Theta_j$, respectively, and let $H_a$ denote the entropy of the probability distribution of the object groups at the root node of this subtree, i.e.
\begin{eqnarray*}
H_a = - \sum_{i=1}^m \frac{\pi_{\Theta_a^i}}{\pi_{\Theta_a}} \log \frac{\pi_{\Theta_a^i}}{\pi_{\Theta_a}}
\end{eqnarray*}
Now, we show using induction that for any subtree $\T_a$ in the tree $\T$, the following relation holds
\begin{eqnarray*}
\pi_{\Theta_{a}}\mu_a - \pi_{\Theta_a}H_a = \sum_{s \in \cI_a} \p_s^a \pi_{\Theta_{s}} \left \{ 1 - \sum_{q \in Q^{z_s} } p_{z_s}(q)\left [ H(\rho_{s}(q) ) - \sum_{i=1}^m \frac{\pi_{\Theta_{s}^i}}{\pi_{\Theta_s}} H(\rho_{s}^i(q)) \right ] \right \}
\end{eqnarray*}
where $\cI_a$ denotes the set of internal nodes in the subtree $\T_a$.

The relation holds trivially for any subtree rooted at a leaf node of the tree $\T$ with both the left hand side and the right hand side of the expression being equal to $0$. Now, assume the above relation holds for all subtrees rooted at the child nodes of node `$a$'. Note that node `$a$' has a set of left and right child nodes, each set corresponding to one query from the query group selected at that node. Then, using the decomposition in Lemma \ref{lem:lemma group identification} on each query from this query group, we have
\begin{align*}
1\cdot\pi_{\Theta_a} [ \mu_a - H_a] &= \sum_{q \in Q^{z_a}} p_{z_a}(q) \pi_{\Theta_a} [\mu_a - H_a ] \\
 & =  \sum_{q \in Q^{z_a}} p_{z_a}(q) \left \{\pi_{\Theta_{l^q(a)}} [\mu_{l^q(a)} -  H_{l^q(a)}] + \pi_{\Theta_{r^q(a)}}[ \mu_{r^q(a)} - H_{r^q(a)}] \right. \\
&  \ \ \quad  + \left. \pi_{\Theta_a} \left [ 1 -  H(\rho_a(q)) - \sum_{i=1}^m \frac{\pi_{\Theta_a^i}}{\pi_{\Theta_a}} H(\rho_a^i(q)) \right ] \right \} \\
& = \sum_{q \in Q^{z_a}} p_{z_a}(q) \left \{\pi_{\Theta_{l^q(a)}} [\mu_{l^q(a)} -  H_{l^q(a)}] + \pi_{\Theta_{r^q(a)}}[ \mu_{r^q(a)} - H_{r^q(a)}] \right \} \\
&  \ \ \quad  + \pi_{\Theta_a} \left \{ 1 - \sum_{q \in Q^{z_a}}p_{z_a}(q) \left [ H(\rho_a(q)) - \sum_{i=1}^m \frac{\pi_{\Theta_a^i}}{\pi_{\Theta_a}} H(\rho_a^i(q)) \right ] \right \}
\end{align*} 
where $l^q(a), r^q(a)$ correspond to the left and right child of node `$a$' when query $q$ is chosen from the query group and $\mu_{l^q(a)}, \pi_{\Theta_{l^q(a)}}, H_{l^q(a)}$ correspond to the expected depth of a leaf node in the subtree $\T_{l^q(a)}$, probability mass of the objects at the root node of this subtree, and the entropy of the probability distribution of the objects at the root node of this subtree respectively. Now, using the induction hypothesis, we get

\begin{align*}
\pi_{\Theta_a} \mu_a - \pi_{\Theta_a}H_a & = \sum_{q \in Q^{z_a}} p_{z_a}(q) \left \{ \sum_{s \in \cI_{l^q(a)}} \p_s^{l^q(a)} \pi_{\Theta_{s}} \left [ 1 - \sum_{q \in Q^{z_s} } p_{z_s}(q)\left ( H(\rho_{s}(q) ) - \sum_{i=1}^m \frac{\pi_{\Theta_{s}^i}}{\pi_{\Theta_s}} H(\rho_{s}^i(q)) \right ) \right ] \right \} \\
& \ \ \quad + \sum_{q \in Q^{z_a}} p_{z_a}(q) \left \{ \sum_{s \in \cI_{r^q(a)}} \p_s^{r^q(a)} \pi_{\Theta_{s}} \left [ 1 - \sum_{q \in Q^{z_s} } p_{z_s}(q)\left ( H(\rho_{s}(q) ) - \sum_{i=1}^m \frac{\pi_{\Theta_{s}^i}}{\pi_{\Theta_s}} H(\rho_{s}^i(q)) \right ) \right ] \right \} \\
& \ \ \quad + \pi_{\Theta_a} \left \{ 1 - \sum_{q \in Q^{z_a}}p_{z_a}(q) \left [ H(\rho_a(q)) - \sum_{i=1}^m \frac{\pi_{\Theta_a^i}}{\pi_{\Theta_a}} H(\rho_a^i(q)) \right ] \right \} \\
& =  \sum_{s \in \cI_a} \p_s^a \pi_{\Theta_{s}} \left \{ 1 - \sum_{q \in Q^{z_s} } p_{z_s}(q)\left [ H(\rho_{s}(q) ) - \sum_{i=1}^m \frac{\pi_{\Theta_{s}^i}}{\pi_{\Theta_s}} H(\rho_{s}^i(q)) \right ] \right \}
\end{align*}
thereby completing the induction. Finally, the result follows by applying the relation to the subtree rooted at the root node of $\T$, whose probability mass $\pi_{\Theta_a} = 1$.

\section*{Appendix $\mathrm{II}$}
\subsection*{Reduction factor calculation in the persistent noise model}
\label{sec:reduction factor calculation}
At any internal node $a \in \cI$ in a tree, let $\delta_i^a$ denote the Hamming distance between the query responses up to this internal node ($Q_a$) and the true responses of object $\theta_i$ to those queries. Also, let $n_a$ denote the number of queries from the set of $N\nu$ queries (that were prone to error) in the set $Q \setminus Q_a$ and for a query $q \in Q \setminus Q_a$, denote by $b_i(q)$ the binary response of object $\theta_i$ to that query. Denote by the set $I^a = \{i: \delta_i^a \leq \epsilon'\}$, the object groups with non-zero number of objects at this internal node. All the formulas below come from routine calculations based on probability model $2$.  

For a query $q \in Q \setminus Q_a$, that is not prone to error, the reduction factor and the group reduction factors generated by choosing that query at node `$a$' are as follows. The group reduction factor of any group $i \in I^a$ is equal to $1$ and the reduction factor is given by
\begin{eqnarray*}
\rho_a = \frac{\max \left \{ \underset{i \in I_0^{a}}{\operatorname{\sum}} \pi_i \left [\overset{\tau_i^{a}}{\underset{e=0}{\operatorname{\sum}}} {n_a \choose e} p^{e+\delta_i^a}(1-p)^{N\nu-e-\delta_i^a} \right ], \underset{i \in I_1^{a}}{\operatorname{\sum}} \pi_i \left [\overset{\tau_i^{a}}{\underset{e=0}{\operatorname{\sum}}} {n_a \choose e} p^{e+\delta_i^a}(1-p)^{N\nu-e-\delta_i^a} \right ] \right \}}{\underset{i \in I_0^a \bigcap I_1^a}{\operatorname{\sum}} \pi_i \left [\overset{\tau_i^{a}}{\underset{e=0}{\operatorname{\sum}}} {n_a \choose e} p^{e+\delta_i^a}(1-p)^{N\nu-e-\delta_i^a} \right ]} 
\end{eqnarray*} 
where $I_0^a = \{ i \in I^a: b_i(q) = 0 \}$, $I_1^a = \{ i \in I^a: b_i(q) = 1 \}$ and $\tau_i^a = \min (n_a,\epsilon' - \delta_i^a)$.

In addition, for a query $q \in Q \setminus Q_a$ that is prone to error, denote by $\delta_i^{l(a)},\delta_i^{r(a)}$ the Hamming distance between the user responses to queries up to the left and right child node of node `$a$' with query $q$ chosen at node `$a$', and the true responses of object $\theta_i$ to those queries. In particular, $\delta_i^{l(a)} = \delta_i^a + |b_i(q) - 0|$ and $\delta_i^{r(a)} = \delta_i^a + |b_i(q) - 1|$. Then, the reduction factor and the group reduction factors generated by choosing this query at node `$a$' are as follows. The group reduction factor of a group $i \in I^a$ whose $\delta_i^a = \epsilon'$ is equal to $1$ and that of a group whose $\delta_i^a < \epsilon'$ is given by
\begin{eqnarray*}
\rho_a^i = \frac{\max \left \{ \overset{\tau_i^{l(a)}}{\underset{e=0}{\operatorname{\sum}}} {n_a - 1 \choose e} p^{e+\delta_i^{l(a)}}(1-p)^{N\nu-e-\delta_i^{l(a)}}, \overset{\tau_i^{r(a)}}{\underset{e=0}{\operatorname{\sum}}} {n_a - 1 \choose e} p^{e+\delta_i^{r(a)}}(1-p)^{N\nu-e-\delta_i^{r(a)}}   \right \}}{\overset{\tau_i^{a}}{\underset{e=0}{\operatorname{\sum}}} {n_a \choose e} p^{e+\delta_i^a}(1-p)^{N\nu-e-\delta_i^a}}
\end{eqnarray*}
where $\tau_i^{l(a)} = \min (n_a - 1,\epsilon' - \delta_i^{l(a)})$ and $\tau_i^{r(a)} = \min (n_a - 1,\epsilon' - \delta_i^{r(a)})$, and the reduction factor is given by
\begin{eqnarray*}
\rho_a = \frac{\max \left \{ \underset{i \in I^{l(a)}}{\operatorname{\sum}} \pi_i \left [ \overset{\tau_i^{l(a)}}{\underset{e=0}{\operatorname{\sum}}} {n_a - 1 \choose e} p^{e+\delta_i^{l(a)}}(1-p)^{N\nu-e-\delta_i^{l(a)}} \right ], \underset{i \in I^{r(a)}} {\operatorname{\sum}} \pi_i \left [ \overset{\tau_i^{r(a)}}{\underset{e=0}{\operatorname{\sum}}} {n_a - 1 \choose e} p^{e+\delta_i^{r(a)}}(1-p)^{N\nu-e-\delta_i^{r(a)}} \right ] \right \}}{\underset{i \in I^{a}}{\operatorname{\sum}} \pi_i \left [ \overset{\tau_i^{a}}{\underset{e=0}{\operatorname{\sum}}} {n_a  \choose e} p^{e+\delta_i^{r(a)}}(1-p)^{N\nu-e-\delta_i^{r(a)}} \right ] }
\end{eqnarray*}

\section*{Acknowledgment}
The authors would like to thank R. Nowak for helpful feedback and S. S. Pradhan for insightful discussions during the initial phase of this research. Also, the authors thank A. Ganesan, R. Richardson, P. Saxman, G. Vallabha and C. Weber for their contributions, and C. Weber for partitioning the WISER chemicals into groups. This work was supported in part by NSF Grant CCF-0830490 and NIH Grant $\#$UL1RR024986.

\bibliographystyle{IEEEtran}
\bibliography{ref}


\end{document}